\documentclass[lettersize,journal]{IEEEtran}
\usepackage{amsmath,amsfonts}
\usepackage{algorithmic}
\usepackage{algorithm}
\usepackage{array}
\usepackage[caption=false,font=footnotesize,labelfont=rm,textfont=rm]{subfig}
\usepackage{textcomp}
\usepackage{stfloats}
\usepackage{url}
\usepackage{verbatim}
\usepackage{graphicx}
\usepackage{cite}
\hyphenation{op-tical net-works semi-conduc-tor IEEE-Xplore}

\usepackage{bm}
\usepackage{multirow}
\usepackage{booktabs}
\usepackage{amsthm}
\usepackage{makecell}
\usepackage{cuted}

\newtheorem{theorem}{Theorem}

\begin{document}

\title{PointWavelet: Learning in Spectral Domain for 3D Point Cloud Analysis}

\author{Cheng Wen, Jianzhi Long, Baosheng Yu, Dacheng Tao,~\IEEEmembership{Fellow,~IEEE}
\thanks{Cheng Wen, Jianzhi Long, Baosheng Yu, and Dacheng Tao are with the School of Computer Science at the University of Sydney, NSW 2008, Australia. Emails: \{cwen6671; jlon7198\}@uni.sydney.edu.au, \{baosheng.yu; dacheng.tao\}@sydney.edu.au.  }
\thanks{Manuscript received April 19, 2021; revised August 16, 2021.}
}



\maketitle

\begin{abstract}
With recent success of deep learning in 2D visual recognition, deep learning-based 3D point cloud analysis has received increasing attention from the community, especially due to the rapid development of autonomous driving technologies.
However, most existing methods directly learn point features in the spatial domain, leaving the local structures in the spectral domain poorly investigated. In this paper, we introduce a new method, PointWavelet, to explore local graphs in the spectral domain via a learnable graph wavelet transform. Specifically, we first introduce the graph wavelet transform to form multi-scale spectral graph convolution to learn effective local structural representations. To avoid the time-consuming spectral decomposition, we then devise a learnable graph wavelet transform, which significantly accelerates the overall training process. Extensive experiments on four popular point cloud datasets,  ModelNet40, ScanObjectNN, ShapeNet-Part, and S3DIS, demonstrate the effectiveness of the proposed method on point cloud classification and segmentation.
\end{abstract}

\begin{IEEEkeywords}
Point cloud, spectral analysis, learnable wavelet transform.
\end{IEEEkeywords}

\section{Introduction}

\IEEEPARstart{W}{ith} the development of 3D sensing technologies, large-scale 3D datasets are now increasingly available, such as ModelNet~\cite{wu20153d}, ShapeNet~\cite{chang2015shapenet}, ScanObjectNN~\cite{uy2019revisiting}, S3DIS~\cite{armeni20163d} and KITTI~\cite{geiger2012we}, which significantly help machines to better perceive and understand the real world we live in. When referring to 3D data, the main formats include point cloud~\cite{qi2017pointnet}, voxel~\cite{zhou2018voxelnet}, mesh~\cite{wang2018pixel2mesh} and multi-view images~\cite{su2015multi}.
Recently, point cloud has received increasing attention from the community considering that it is usually the first-hand data captured by popular scanning devices such as Kinect, PrimeSense and LiDAR. Therefore, it has been extensively applied in various scene-perception fields such as scene reconstruction~\cite{lan2019robust,hu2018semantic}, autonomous driving~\cite{nagy2018real}, and virtual reality~\cite{wirth2019pointatme}. 

To learn effective representations, a point cloud can be processed in either the spatial or the spectral domain. Specifically, spatial learning methods such as PointNet++~\cite{qi2017pointnet++}, PointCNN~\cite{li2018pointcnn} and DGCNN~\cite{wang2019dynamic} directly utilize spatial convolution/pooling operations to extract point features. However, the irregular and unordered spatial structures of point clouds pose a challenge to existing methods, which prompts us to learn representations in the spectral domain. Though successful for 1D time series data and 2D images, classical frequency analysis struggles to learn effectively from irregularly distributed data such as molecules, point clouds and social media networks.

\begin{figure*}[!ht]
\centering
    \subfloat[]{
        \begin{minipage}[b]{0.18\linewidth} 
        \includegraphics[width=1\textwidth]{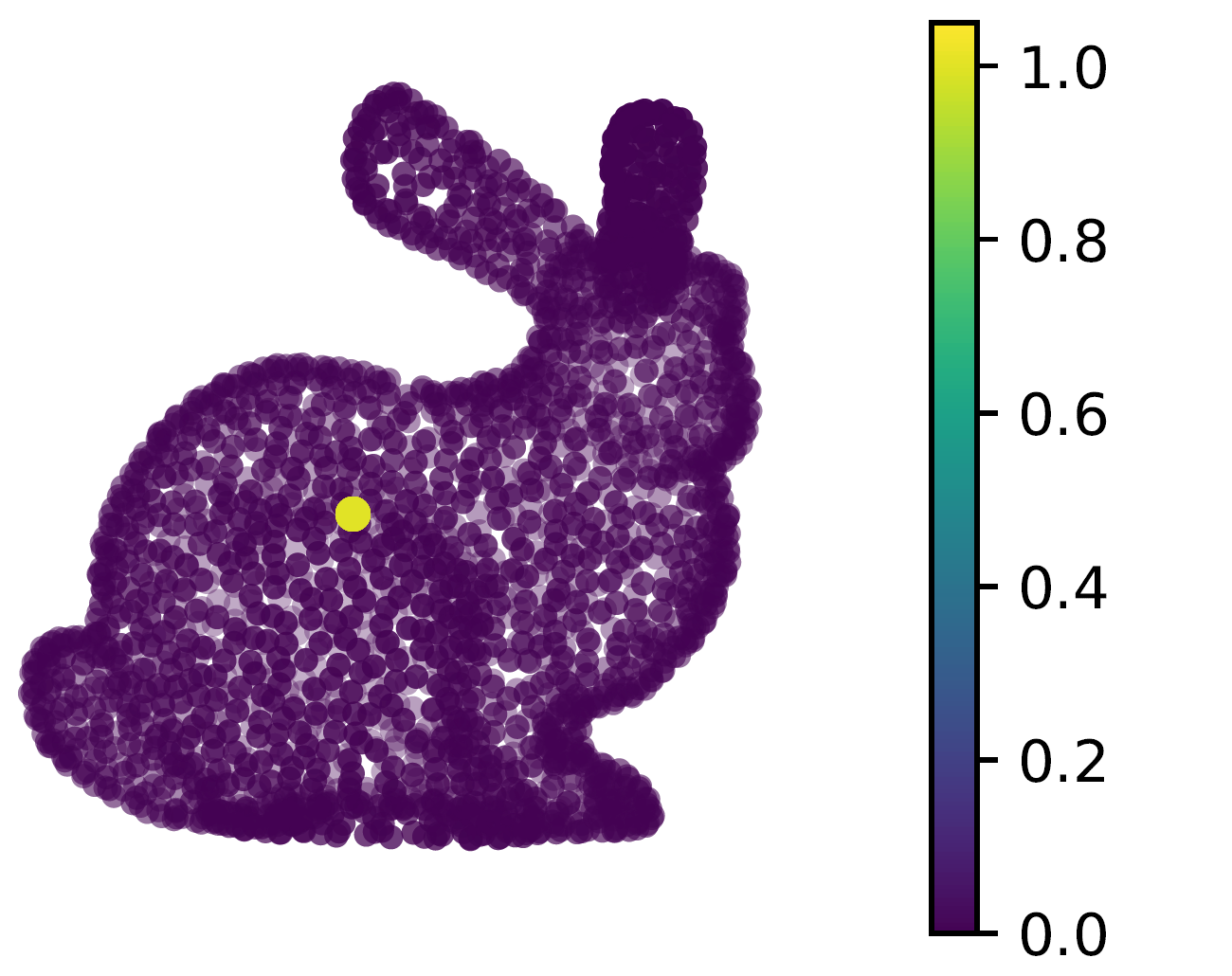}
        \end{minipage}
        \label{fig.localization.input}
    }
    \subfloat[]{
        \begin{minipage}[b]{0.18\linewidth} 
        \includegraphics[width=1\textwidth]{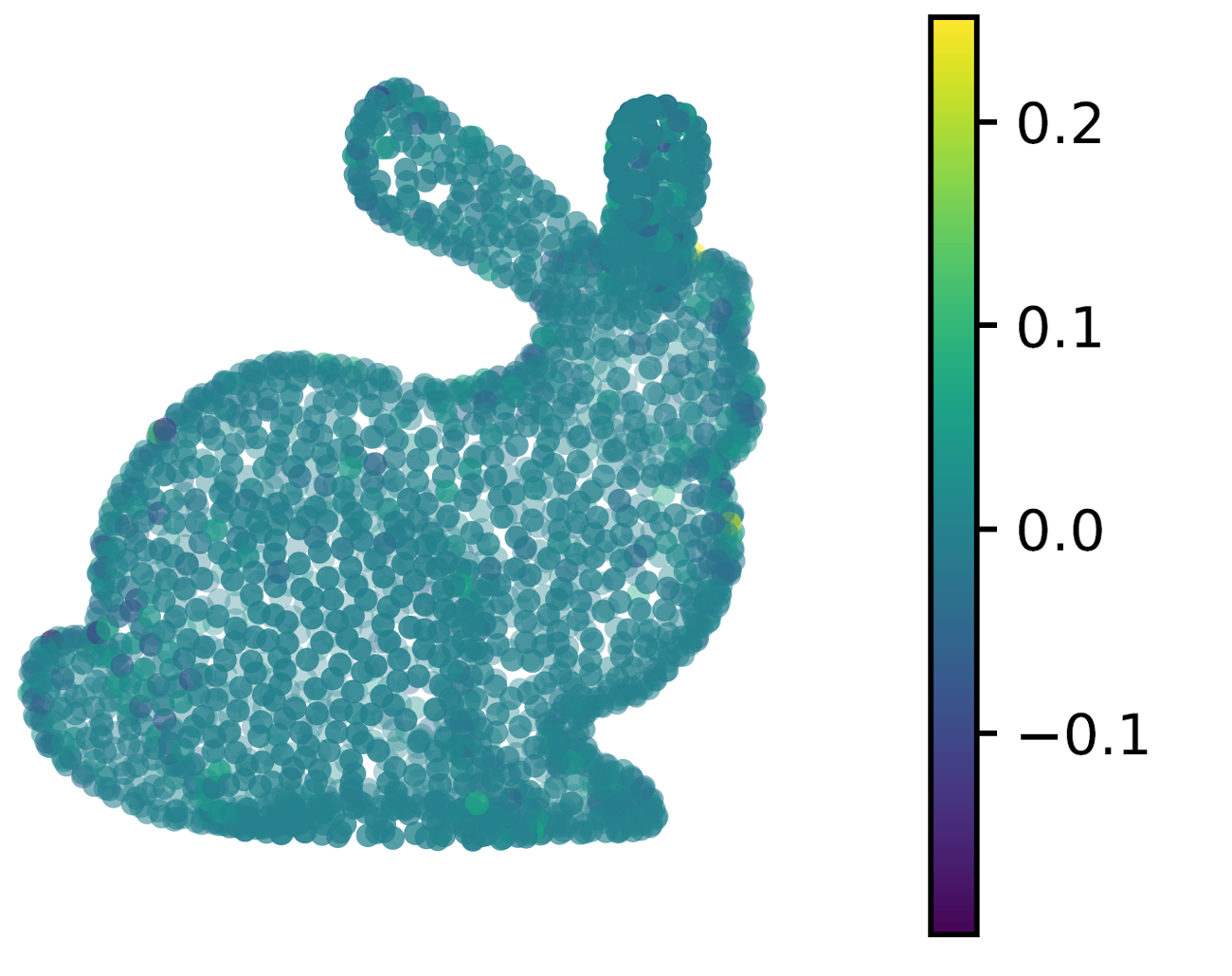}
        \end{minipage}
    }
    \subfloat[]{
        \begin{minipage}[b]{0.18\linewidth} 
        \includegraphics[width=1\textwidth]{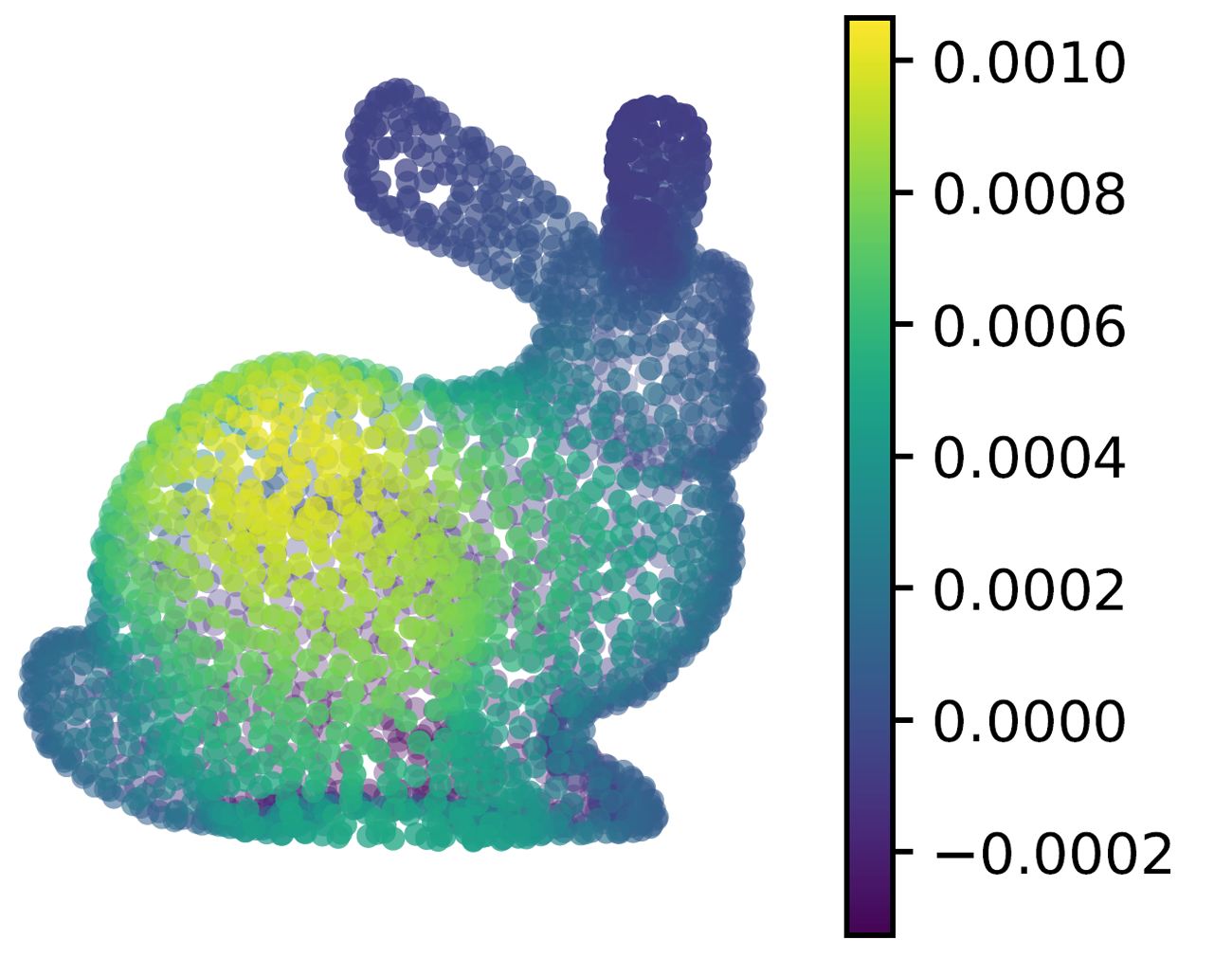}
        \end{minipage}
    }
    \subfloat[]{
        \begin{minipage}[b]{0.18\linewidth} 
        \includegraphics[width=1\textwidth]{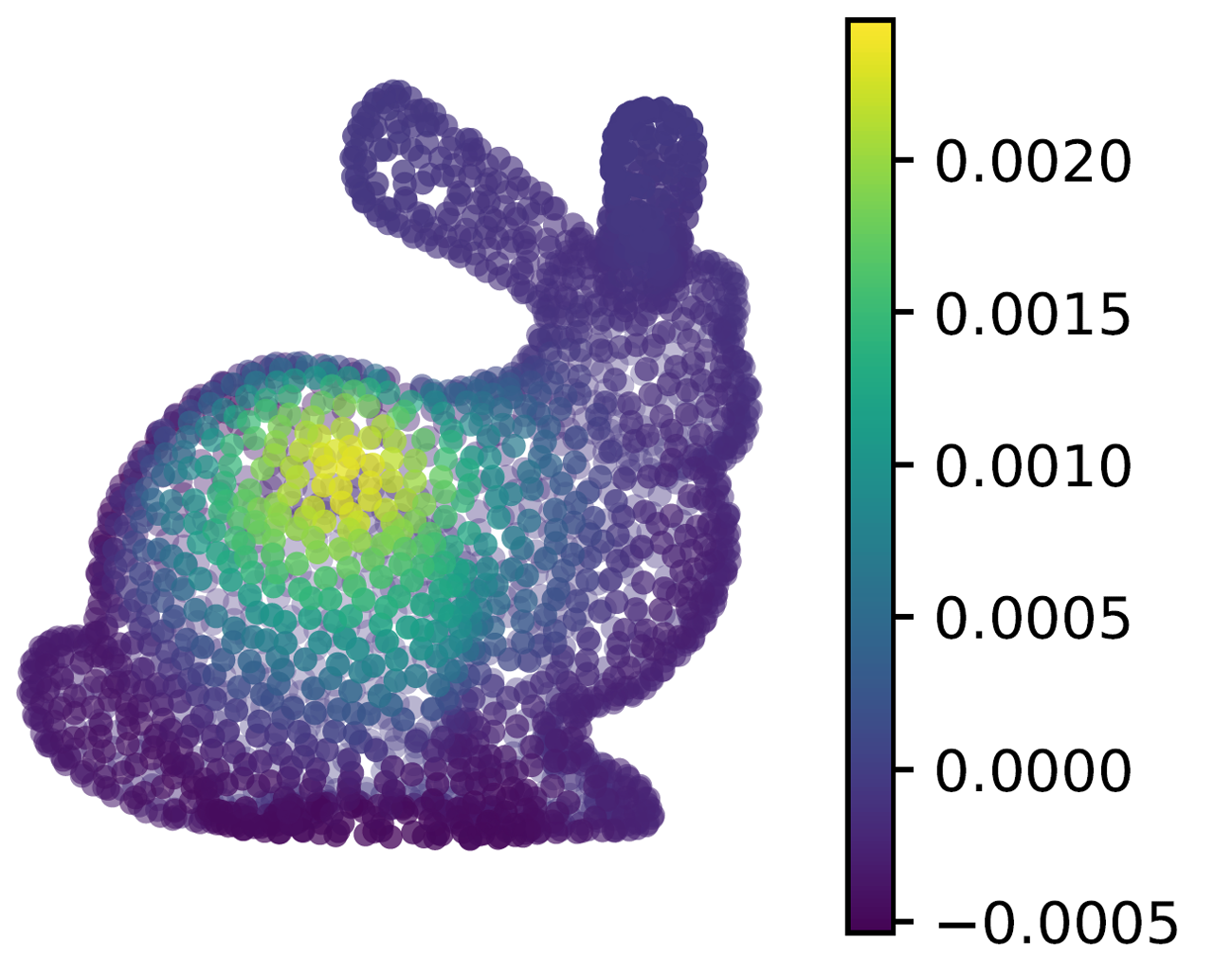}
        \end{minipage}
    }
    \subfloat[]{
        \begin{minipage}[b]{0.18\linewidth} 
        \includegraphics[width=1\textwidth]{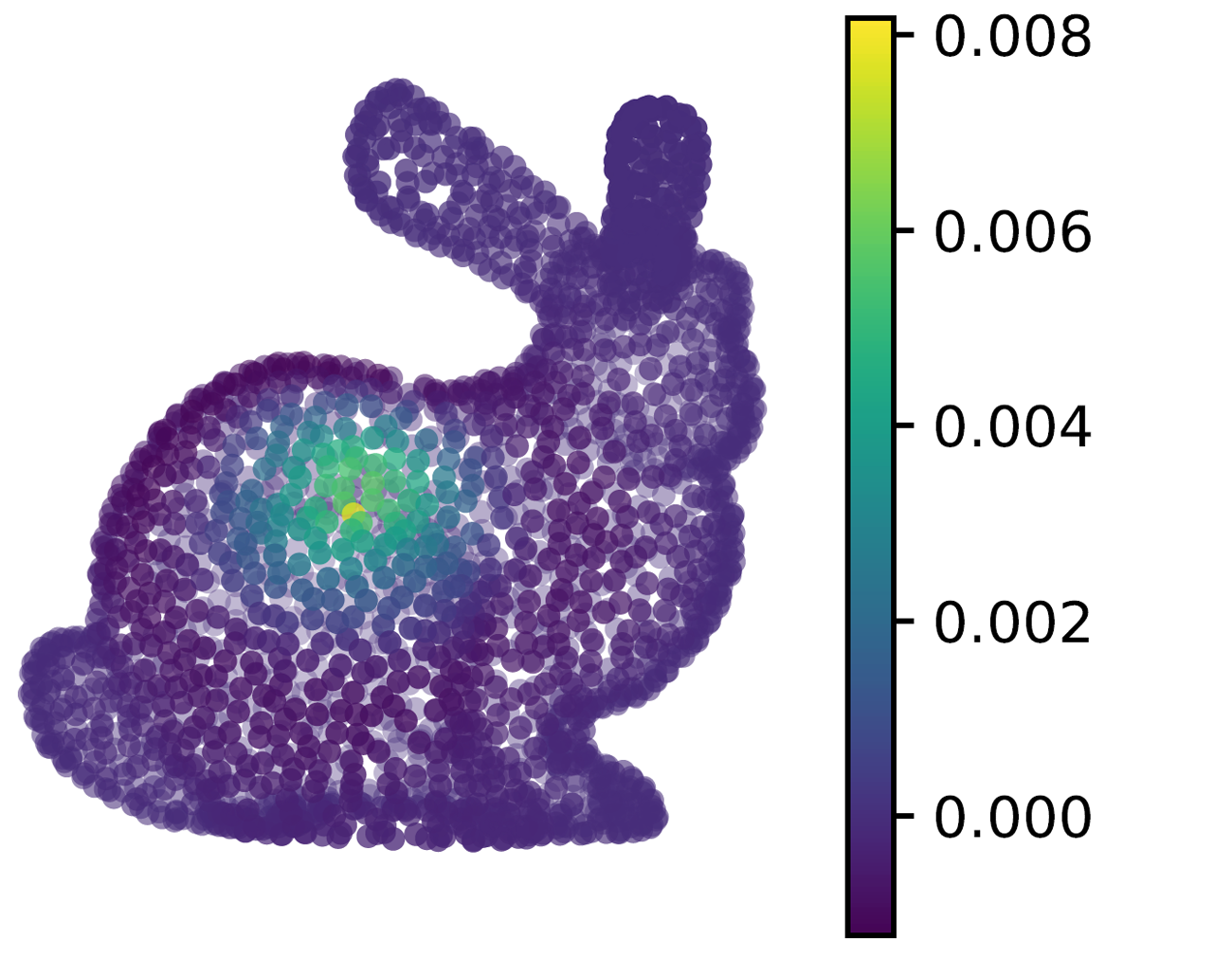}
        \end{minipage}
    }
\caption{The input is the delta function (unit impulse) and the output is the impulse response spreading on point cloud. The color bar indicates the intensity of the signal on point cloud. The input signal is in the spatial domain (on the graph), and for better visualization, we plot the transformed signal on the original graph. (a) Kronecker delta function $\delta_v$ at vertex (node) $v$ (the yellow point with unit intensity); (b) Graph Fourier transform; (c)(d)(e) Mexican hat graph wavelet transform with kernels $h(\lambda)$, $g(s_1\lambda)$ and $g(s_2\lambda)$.}
\label{fig.localization}
\end{figure*}

Powered by graph signal processing~\cite{sandryhaila2013discrete,shuman2013emerging}, it becomes possible to apply frequency analysis for data with underlying irregular structures. Therefore, we utilize this method to address the point cloud by modeling it as a graph to generalize the classical discrete signal processing to the graph domain. Specifically, a point cloud can be represented as a graph by connecting one point with other points in the local neighborhood space. After that, 1) the signal defined on the graph, like point attributes, is firstly transformed into the spectral domain by graph Fourier transform~\cite{sandryhaila2013discrete,shuman2013emerging}, and then 2) the convolution operation is conducted as spectral filtering, i.e., the output signal can be transformed back using the inverse graph Fourier transform.

Graph Fourier transform has been previously introduced for point cloud analysis~\cite{wang2018local,te2018rgcnn,chen2018pointagcn,zhang2018graph,lu2020pointngcnn}, while in this paper we consider employing graph wavelet transform~\cite{hammond2011wavelets}. A crucial property of graph wavelet transform is that wavelet functions are well localized in both the spatial and the spectral domains, as shown in Fig.~\ref{fig.localization}. Specifically, given the input signal, the graph Fourier transform spreads over all frequencies, while the graph wavelet transform localizes on different frequency bands. Previous work like~\cite{zhang2022ms} also uses graph wavelet transform for breast cancer diagnosis, but still fails to take full advantage of multi-scale spectral features. Inspired by~\cite{vaswani2017attention,dosovitskiy2020image}, we introduce a new WaveletFormer layer to explore the relationships between multi-scale spectral embeddings. Specifically, spectral features at different scales are treated as different tokens, and the relationships between them are then captured by the self-attention mechanism.

To apply graph Fourier or wavelet transform, the eigendecomposition of graph Laplacian matrix is usually indispensable. In existing spectral-based graph neural networks~\cite{bruna2013spectral,defferrard2016convolutional,kipf2016semi}, all layers use the full graph as input such that the graph Laplacian matrix construction and the eigendecomposition can be pre-computed. However, the point cloud feature extraction is always conducted in a hierarchical manner, which indicates that the local graph in each layer requires to be dynamically constructed. Though online computation of local graph Laplacian matrix and its eigendecomposition has been proposed in~\cite{wang2018local}, it is still not efficient enough. In contrast, we devise a novel method to bypass the time-consuming task of eigendecomposition. It is done by training an orthogonal matrix, under the assumption that local graph connections can also be learned by neural networks efficiently, which significant accelerates the overall optimization process.

In this paper, we introduce a WaveletFormer layer to process point clouds in the spectral domain via graph wavelet transform~\cite{hammond2011wavelets}, and propose a fast training scheme to avoid high-demanding computation. We then refer to the proposed method as PointWavelet. Extensive experiments on several point cloud datasets demonstrate the effectiveness of the proposed method, which also provides valuable insights on designing spectral graph convolution networks. The main contributions of this paper are as follows:
\begin{enumerate}
\item We introduce a novel method for point cloud analysis in the spectral domain via graph wavelet transform, taking full advantage of different scales of spectral features.
\item We learn to construct the local graph when hierarchically extracting point features and avoid computationally demanding eigendecomposition by a trainable layer.
\item We conduct extensive experiments on various point cloud tasks to validate our method, which inspires the learning-based spectral analysis for point clouds.
\end{enumerate}

\section{Related Work}

\noindent\textbf{Deep Learning on Point Clouds}. Deep learning on point clouds has received increasing attention in the last few years. Generally, a point cloud is not placed on a regular grid, which means each point is independent from others and the number of its neighbors is not fixed. Therefore, applying deep learning to point cloud is always challenging. 
PointNet~\cite{qi2017pointnet} is the pioneering neural network for point set processing, which embeds the input points into high dimensional feature space pointwisely and then uses a symmetric function to aggregate the features for all points in a permutation-invariant manner as the final global feature. The multi-layer perceptrons (MLPs) introduced by this work has been widely used as the building blocks in many other point cloud networks~\cite{qi2017pointnet++,zhao2019pointweb,yan2020pointasnl,wen2021learning} to learn pointwise representations. 
Then, other works such as convolution-based~\cite{li2018pointcnn,wu2019pointconv,thomas2019kpconv}, graph-based~\cite{wang2018local,wang2019dynamic,zhang2018graph}, and volumetric-based~\cite{maturana2015voxnet,riegler2017octnet} methods have been developed to aggregate local hierarchical information and achieved superior performance. 

Inspired by the significant success in NLP and CV, Transformers~\cite{vaswani2017attention} have been applied to point cloud processing~\cite{zhao2021point,guo2021pct,mazur2021cloud,pan20213d,yu2022point,lai2022stratified,hui2021pyramid}. For example, Zhao et al.~\cite{zhao2021point} propose to apply self-attention in the local neighborhood of each point, where the proposed transformer layer is invariant to the permutation of the point set, making it suitable for point set processing tasks. Guo et al.~\cite{guo2021pct} propose a novel point cloud transformer framework or PCT to replace the original self-attention module with a more suitable offset-attention module, which includes an implicit Laplace operator and a normalization refinement.\\

\noindent\textbf{Spectral CNNs}.
In spectral-based methods~\cite{bruna2013spectral, defferrard2016convolutional, kipf2016semi, levie2018cayleynets, susnjara2015accelerated}, graph Fourier transform is used to convert signals from the spatial domain to the spectral domain, which is equivalent to convolution in the spectral domain. This is done by multiplying graph signals and the eigenvectors of the graph Laplacian matrix, which maintains the weight sharing property of CNN.
For example, Bruna et al.~\cite{bruna2013spectral} assume that the filter is a set of learnable parameters and considers graph signals with multiple channels, while it is computationally inefficient due to the eigendecomposition and the filter is non-spatially localized. In follow-up works, Defferrard et al.~\cite{defferrard2016convolutional} propose ChebNet to approximate the filter by Chebyshev polynomials of the diagonal matrix of eigenvalues. As an improvement over~\cite{bruna2013spectral}, the filter defined by ChebNet is localized in space, which means it can extract local features independently of the graph size.  Kipf et al.~\cite{kipf2016semi} further introduce a first-order approximation of ChebNet, namely GCN, simplifying the convolution operation to alleviate the problem of overfitting.
Spectral methods are also well suited for many learning-based applications in point cloud, since point cloud can be efficiently represented by graph with particular connectivity restrictions~\cite{yi2017syncspeccnn,wang2018local}.\\

\noindent\textbf{Graph Wavelet Networks}.
Wavelet transform is an excellent alternative to Fourier transform in signal processing and data compression. In wavelet analysis, a signal is decomposed into components localized in both the spatial and the spectral domains. 
By applying the concept of classical wavelet operators in the spectral domain, \cite{hammond2011wavelets} first define the spectral wavelet transform on arbitrary graphs. In addition, they propose an efficient way to bypass the eigendecomposition of the Laplacian matrix and approximates wavelets with Chebyshev polynomials. Based on the graph wavelet frame, \cite{xu2019graph} introduce a graph diffusion network and
\cite{shen2021multi} propose a multi-scale graph convolutional
network. \cite{wang2020mgcn} use graph wavelet for computing descriptors for characterizing points on three-dimensional surfaces and \cite{zhang2022ms} use graph wavelet for breast cancer diagnosis. It is worth mentioning that \cite{rustamov2013wavelets} introduce a machine learning framework for constructing graph wavelets that can sparsely represent a given class of signals. This framework uses the lifting scheme, and takes into consideration structural properties of both graph signals and their underlying graphs. The lifting scheme is a custom-design construction of biorthogonal wavelets which are different from the classic wavelets, like Haar wavelet and Mexican hat wavelet. Some recent methods also use the lifting-based frame to define graph wavelet networks~\cite{rodriguez2020deep,xu2022graph,huang2021adaptive}.

\section{Preliminaries}
\label{preliminary}
In this section, we introduce the basic concepts of spectral graph theory, i.e., graph Fourier and wavelet transform. 

\subsection{Graph Fourier Transform}\label{graph_fourier_transform}
Given a set $\mathcal{V}$ with $n$ vertices, we then have the weighted graph $\mathcal{G}=(\mathcal{V}, \mathcal{E}, \mathcal{W})$, where $\mathcal{E}$ and $\mathcal{W}$ indicate the graph edges and the weights on graph edges, respectively.
The normalized Laplacian matrix $L \in \mathbb{R}^{n \times n}$ of the graph $\mathcal{G}$ is usually defined as follows:
\begin{equation}
\label{normalized_Laplacian_matrix}
L = I - D^{-\frac{1}{2}} A D^{-\frac{1}{2}},
\end{equation}
where $I$ is an identity matrix of size $n$, $A \in \mathbb{R}^{n \times n}$ indicates the adjacency matrix of $\mathcal{G}$, and $D \in \mathbb{R}^{n\times n}$ is the diagonal degree matrix with the entry $D_{ii} = \sum_j A_{ij}$. Here, given $L$ as a real symmetric positive semi-definite matrix, we  then have its eigendecomposition $L = U \Lambda U^\top$, where each column of $U \in \mathbb{R}^{n \times n}$ indicates an eigenvector of $L$, and $\Lambda$ is the diagonal matrix (which is also known as the spectrum of $L$) whose diagonal elements are the corresponding eigenvalues, i.e., $\Lambda_{ii} = \lambda_i$. 
Therefore, given a signal $f \in \mathbb{R}^{n}$, we have its graph Fourier transform as $\hat{f} = U^\top f$ and its inverse graph Fourier transform as $f = U \hat{f}$.
The definition of convolution operator on a graph can be enabled by the graph Fourier transform~\cite{defferrard2016convolutional}, i.e., the graph convolution ($*_\mathcal{G}$) with a graph filter $g \in \mathbb{R}^n$ can be defined as
\begin{equation}
\label{graph_convolution}
g *_\mathcal{G} f = U((U^\top g) \odot (U^\top f)),
\end{equation}
where $\odot$ indicates the element-wise Hadamard product operation. In addition, Eq.~\eqref{graph_convolution} can be further simplified as 
\begin{equation}
g *_\mathcal{G} f = Ug_{\theta}U^\top f,
\end{equation}
where $g_{\theta} = \mathop{\mathrm{diag}}(U^\top g)$ usually indicates a graph filter with a set of trainable parameters in spectral CNNs ~\cite{bruna2013spectral,defferrard2016convolutional,kipf2016semi,wu2020comprehensive}.

\subsection{Graph Wavelet Transform}
\label{graph_wavelet_transform}

The (spectral) graph wavelet transform is first introduced in~\cite{hammond2011wavelets} based on the graph Fourier transform. Given a band-pass filter $g(\cdot)$ (also named wavelet function) with the parameter $s \in \mathbb{R}^+$ called scale, we can define a diagonal matrix $g_s = \mathop{\mathrm{diag}}(g(s\lambda_1), \dots, g(s\lambda_i), \dots, g(s\lambda_n))$, where $\lambda_i$ is the $i$-th eigenvalue of $L$. Then, the wavelet basis at the scale $s$ can be defined as
\begin{equation}
\label{equation_wavelets_s}
\Psi_s = [\psi_{s, 1}, \dots, \psi_{s, n}] = Ug_sU^\top.
\end{equation}
For the wavelet centered at a vertex $v \in \mathcal{V}$, we have 
\begin{equation}
\psi_{s, v} = Ug_sU^\top \delta_v,
\end{equation}
where $\delta_v$ is a one-hot vector for the vertex $v$ (the delta function in Fig.~\ref{fig.localization}). 
Note that, though the filter $g(\cdot)$ is a continuous function defined on $\mathbb{R}^+$, $g_s$ is discrete with different scale $s$, i.e., only $g(s\lambda_i), i=1, 2, ..., n$ are required. Another important concept  is the scaling function $h(\cdot)$ which is a low-pass filter that captures low-frequency information. Note that, $h(\cdot)$ does not use the parameter $s$. Similarly, we can define $\Psi_0$ via $\mathop{\mathrm{diag}}(h(\lambda_1), \dots, h(\lambda_i), \dots, h(\lambda_n))$.

Like the graph Fourier transform, graph wavelet transform projects graph signals from the spatial domain into the spectral domain. For example, by using wavelet basis at scale $s$, the graph wavelet transform of signal $f$ can then be defined as $\hat{f}_s = \Psi_{s} f$ and consequently the inverse graph wavelet transform is $f = \Psi_{s}^{-1} \hat{f}_s$. Generally, different scales always imply different receptive fields in the spectral domain. On the one hand, with a small scale $s$, the filter $g(\cdot)$ is stretched out and it lets through high-frequency modes essential to good localization, which means the corresponding wavelets extend only to their close neighborhoods on the graph; On the other hand, with a large scale $s$, the filter $g(\cdot)$ is compressed around low-frequency modes. This encourages wavelets encoding a coarser description of the local environment.

\subsection{Spectral Graph Convolution}
\label{spectral_graph_convolution}

When applying graph Fourier or wavelet transform to the network, the $i$-th spectral graph convolutional layer can be defined as~\cite{wu2020comprehensive,zhang2020deep} 
\begin{equation}
f^{(out)} = \sigma(ITran(\Theta^{(i)} Tran(f^{(in)}))),
\end{equation}
where $f^{(in)} \in \mathbb{R}^{n}$ and $f^{(out)} \in \mathbb{R}^{n}$ denote the input and output features, respectively. $Tran(\cdot)$ is the graph Fourier or wavelet transform, $ITran(\cdot)$ is the corresponding inverse transform, $\Theta^{(i)}$ is the trainable filter learned in the spectral domain, and $\sigma(\cdot)$ denotes the non-linear activation function.

\section{Methodology}
\label{methodology}

In this section, we introduce the proposed WaveletFormer layer via graph wavelet transform and the overall PointWavelet framework. In addition, we also devise an efficient learnable strategy for graph wavelet transform.

\subsection{WaveletFormer Layer}
\label{waveletformer}

To explore multi-scale convolutions in the spectral domain, we introduce wavelets to transform graph signals into the spectral domain at different scales. To achieve this, we devise our spectral neural networks by utilizing the graph wavelet transform as follows. To construct the wavelet $\psi_{s, v}$, we apply a filter $g(\cdot)$, which is usually defined by Haar function, Morlet function, or Mexican hat function. In this paper, we use the Mexican hat function. In addition to the wavelet function, we use the same type of function with different parameters for the single scaling function $h(\cdot)$ that captures low-frequency information. Generally, $h(\cdot)$ and $g(\cdot)$ cannot be chosen arbitrarily. To form a stable recovery of $f$ from its scaling and wavelet function coefficients, $h^2(\lambda) + \sum_{s} g^2(s\lambda)$ should be nonzero for all $\lambda$ on the spectrum of $L$~\cite{hammond2011wavelets}. An example of $h(\cdot)$ and $g(\cdot)$ at different scales is shown in Fig.~\ref{fig.scaling_and_wavelet}.

\begin{figure}[!ht]
    \centering
    \includegraphics[width=\linewidth]{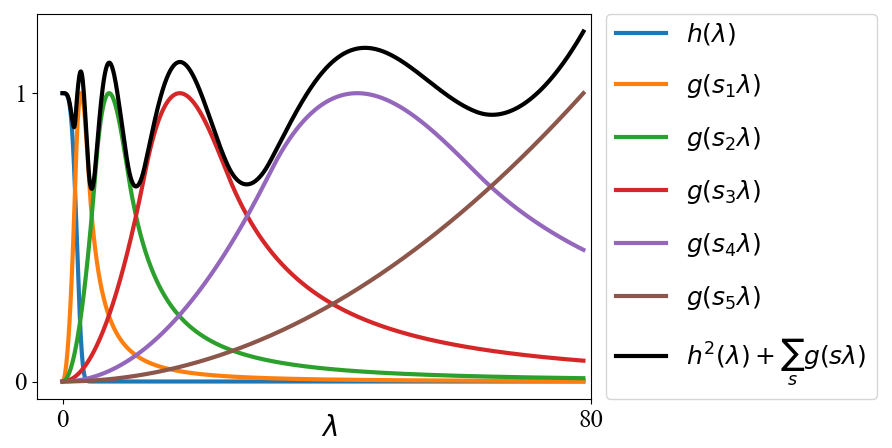}
    \caption{An example of $h(\lambda)$ (the blue curve) and $g(s_i\lambda)$. The sum $h^2(\lambda) + \sum_{s} g^2(s\lambda)$ (the dark curve) is nonzero over all frequencies.}
    \label{fig.scaling_and_wavelet}
\end{figure}

Inspired by Transformer architectures~\cite{vaswani2017attention,dosovitskiy2020image}, we explore the relationships between multi-scale spectral representations and introduce a new WaveletFormer layer as follows. 
Given a set of scales $\{s_1, s_2, \dots, s_J\}$, where $J$ indicates the number of scales, the graph wavelet transform is then defined as a linear mapping from $\mathbb{R}^{n}$ to the corresponding scaling and wavelet function coefficient domain $\mathbb{R}^{(1+J) \times n}$. Specifically, given a point embedding $f \in \mathbb{R}^{n}$, we first transform it into the spectral domain via graph wavelet transform, i.e., we have the spectral embeddings $[\hat{f}_0, \hat{f}_{s_1}, \dots, \hat{f}_{s_J}] = [\Psi_{0} f, \Psi_{s_1} f, \dots, \Psi_{s_J} f]$, which contains the spectral information at different scales. After that, these spectral embeddings are feed into the typical transformer encoder layer~\cite{dosovitskiy2020image} to obtain the output spectral features $[\hat{f}^{'}_0, \hat{f}^{'}_{s_1}, \dots, \hat{f}^{'}_{s_J}]$. We concatenate the spectral embeddings at different scales as the final multi-scale representation. An overview of the proposed WaveletFormer layer is depicted in Fig.~\ref{fig.waveletformer}.

\begin{figure}[!ht]
    \centering
    \includegraphics[width=\linewidth]{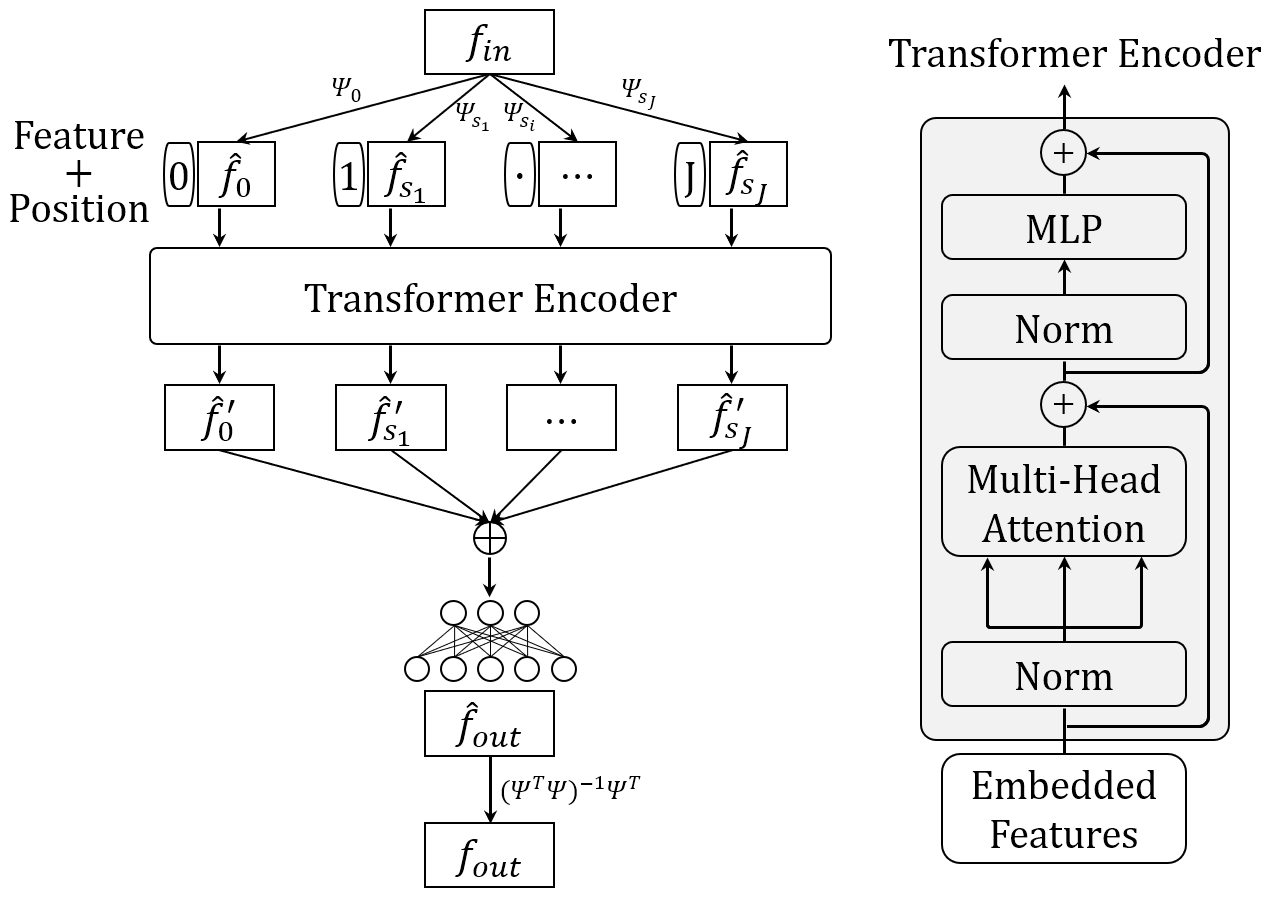}
    \caption{An overview of the proposed WaveletFormer layer. Specifically, we first construct a multi-scale spectral input via graph wavelet transform, which is then fed into a transformer encoder layer~\cite{dosovitskiy2020image}. Lastly, the output spectral embeddings are concatenated together to form the final multi-scale spectral representation.
    }
    \label{fig.waveletformer}
\end{figure}

In frequency analysis, it is of great importance that the graph wavelet transform is invertible, i.e., we can reconstruct the signal from a given set of scaling and wavelet function coefficients. As mentioned before, the graph wavelet transform computes $(1+J) \times n$ scaling and wavelet function coefficients for each $n$ dimensional input $f$. However, as the number of coefficients is larger than the dimension of the original input $f$, the graph wavelet transform is an overcomplete transform and thus cannot have a unique linear inverse. Considering there will be infinitely many different left inverse matrices $\Psi^{-1}$ satisfying $\Psi^{-1}\Psi=I$, we use the pseudoinverse $(\Psi^\top\Psi)^{-1}\Psi^\top$ as the inverse of the graph wavelet transform~\cite{hammond2011wavelets}. Note that, $g_s$ is a diagonal matrix from Eq.~\eqref{equation_wavelets_s}, we thus have $\Psi_s = \Psi_s^\top$. That is, it is easy to evaluate $(\Psi^\top \Psi)^{-1}\Psi^\top$. Specifically, given $p(x) = h^2(x) + g^2(s_1x) + \cdots + g^2(s_Jx)$, we then have 
\begin{equation}
(\Psi^\top \Psi)^{-1} = U p_{\lambda}^{-1} U^\top,
\end{equation}
where $p_{\lambda}^{-1} = \mathop{\mathrm{diag}}(p^{-1}(\lambda_1), p^{-1}(\lambda_2), \cdots, p^{-1}(\lambda_n)) \in \mathbb{R}^{n \times n}$. We would like to refer interested readers to supplementary materials for more details about this calculation.

\begin{figure*}[!t]
    \centering
    \includegraphics[width=\linewidth]{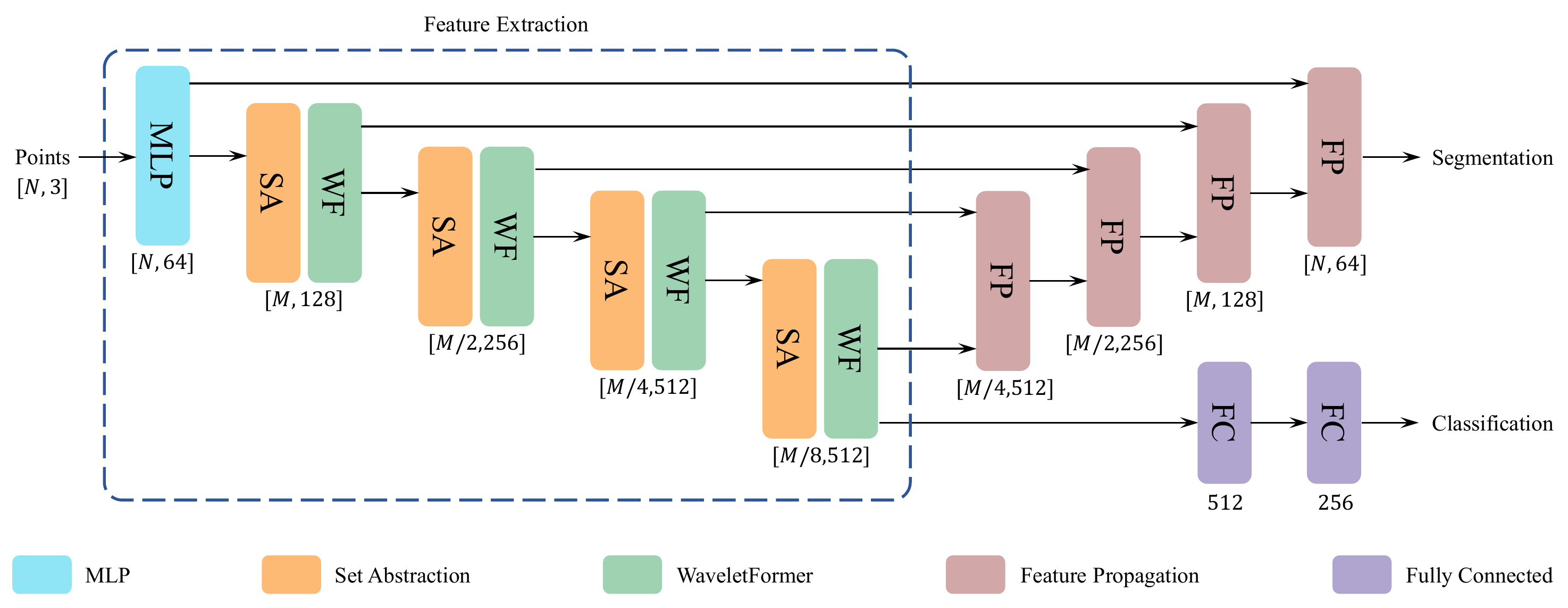}
    \caption{The main PointWavelet architecture. We use the set abstraction (SA) layer~\cite{qi2017pointnet++} for points sampling and grouping. Here, $N$ is the number of input points and $M$ is the grouping centers of the first SA layer. After each SA layer, we attach a WaveletFormer (WF) layer for spectral feature extraction. For point cloud segmentation, we also reuse the feature propagation (FP) layer as in \cite{qi2017pointnet++}.
    }
    \label{fig.main}
\end{figure*}

\subsection{PointWavelet Architecture}
We illustrate the overall PointWavelet architecture for point cloud spectral analysis in Fig.~\ref{fig.main}. For feature extraction, we use the set abstraction (SA) layer~\cite{qi2017pointnet++} for grouping and sampling, and after that we attach an extra WaveletFormer layer for spectral embedding. 
In our pipeline, we use a stack of three the above-mentioned blocks as the backbone network. In addition, we also reuse the feature propagation (FP) layer for segmentation~\cite{qi2017pointnet++} and more details can be found in the experiments section.
A difference between spatial- and spectral-based methods is that the latter always transforms the input into the spectral domain before feature extraction. Contrastingly, the main difference between the proposed PointWavelet and existing spectral-based methods is that PointWavelet extracts spectral features using multi-scale wavelet transform via a new WaveletFormer layer.

\subsection{Learnable Wavelet Transform}
As defined in Eq.~\eqref{equation_wavelets_s}, applying  graph wavelet transform requires the eigendecomposition of $L$ with computational complexity of $O(n^3)$. Therefore, this time-consuming operation poses a great challenge to many real-world applications. Similar problem also arises when applying spectral filtering using graph Fourier transform~\cite{wang2018local,te2018rgcnn,chen2018pointagcn,lu2020pointngcnn,zhang2018graph}. To alleviate this time-consuming operation, previous works such as~\cite{hammond2011wavelets,defferrard2016convolutional,kipf2016semi} usually assume the filter $g_{\theta}$ to be a polynomial of the eigenvalue of $L$.
As a result, the spectral filtering can be approximated by Chebyshev polynomials with degraded performance to avoid time-consuming eigendecomposition. Though the evaluation of graph wavelet transform can also adopt this idea, here we introduce a novel and efficient strategy by leveraging a learnable wavelet transform.

\begin{theorem}
\label{Theorem}
Given a set of vertices $\mathcal{V}$ with $|\mathcal{V}| = n$, there exists one orthogonal matrix $U = [\bm{u_1}, \bm{u_2}, ..., \bm{u_n}] \in \mathbb{R}^{n \times n}$ for which $\bm{u_1}=(c, c, ..., c)^\top \in \mathbb{R}^{n}$ with $c \neq 0$, and a set of $\lambda_i$ satisfying $0 = \lambda_1 \leq \lambda_2 ... \leq \lambda_n$, such that $U \Lambda U^\top \in \mathbb{R}^{n \times n}$ forms the Laplacian matrix of local graph $\mathcal{G}$ , where $\Lambda = \mathop{\mathrm{diag}}(\lambda_1, \lambda_2, ..., \lambda_n)$.
\end{theorem}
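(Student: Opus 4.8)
The plan is to first characterize what it means for a symmetric matrix to \emph{form the Laplacian matrix} of a graph, and then to exhibit a concrete pair $(U,\Lambda)$ of the prescribed form. Recall that the combinatorial Laplacian $L = D - A$ of any weighted graph on $\mathcal{V}$ is real symmetric, positive semi-definite, and annihilated by the all-ones vector, since every row sums to zero. The constraints in the statement encode exactly these properties: the ordering $0 = \lambda_1 \leq \dots \leq \lambda_n$ encodes positive semi-definiteness together with a zero eigenvalue, while $\bm{u_1} = (c,\dots,c)^\top$ forces that zero eigenvalue to be paired with the constant (all-ones) eigenvector. So my first step is to translate ``$U\Lambda U^\top$ is a Laplacian'' into these concrete algebraic conditions.

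First I would establish the forward, constructive direction. Given the $n$ vertices, take any connected local graph $\mathcal{G}$ on $\mathcal{V}$---for instance the $k$-nearest-neighbor graph used in the pipeline, symmetrized so that all edge weights are nonnegative---and form $L = D - A$. Symmetry is immediate from the symmetry of $A$, positive semi-definiteness follows from $x^\top L x = \frac{1}{2}\sum_{i,j} w_{ij}(x_i - x_j)^2 \geq 0$, and $L\mathbf{1} = 0$ because each row of $L$ sums to zero, so $\mathbf{1}$ is an eigenvector with eigenvalue $0$. Next I would invoke the spectral theorem to write $L = U\Lambda U^\top$ with $U$ orthogonal. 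The key point is that the freedom to choose an orthonormal basis inside each eigenspace lets me place the normalized constant vector $\bm{u_1} = \frac{1}{\sqrt{n}}\mathbf{1}$ as the first column---legitimate precisely because $\mathbf{1}$ lies in the zero-eigenspace---which gives $c = 1/\sqrt{n} \neq 0$. Sorting the remaining eigenvalues then yields $0 = \lambda_1 \leq \lambda_2 \leq \dots \leq \lambda_n$. This produces $U$ and $\Lambda$ of exactly the claimed form with $U\Lambda U^\top = L$, establishing existence.

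For completeness I would also check the reverse implication, that any $(U,\Lambda)$ meeting the hypotheses yields $M := U\Lambda U^\top$ with the Laplacian properties. Symmetry is automatic, the eigenvalue condition gives positive semi-definiteness, and writing $e_1$ for the first standard basis vector, orthonormality together with $\bm{u_1} = c\mathbf{1}$ gives $U^\top\mathbf{1} = nc\,e_1$, hence $M\mathbf{1} = U\Lambda(nc\,e_1) = nc\,\lambda_1\,Ue_1 = 0$ since $\lambda_1 = 0$; that is, $M$ has zero row sums. The main obstacle lives here: being positive semi-definite with zero row sums does not by itself force the off-diagonal entries $M_{ij}$ to be nonpositive, as a genuine nonnegatively-weighted graph requires. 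I expect the decisive step to be handling this sign condition---either by arguing that the intended notion of Laplacian only demands the symmetric, positive semi-definite, zero-row-sum structure, so that the learnable $U$ and $\Lambda$ automatically qualify, or by restricting the admissible $U$ so the reconstructed off-diagonals stay nonpositive. A secondary subtlety is that the constant vector is the zero-eigenvector of the \emph{normalized} Laplacian only for regular graphs; in the normalized setting I would either use a regular local graph or pass to the combinatorial Laplacian, where the constant eigenvector always holds.
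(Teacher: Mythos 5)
Your proposal is sound, but it runs mostly in the opposite direction from the paper's own argument, so a comparison is worthwhile. The paper's supplementary proof starts from an \emph{arbitrary} orthogonal $\tilde{U}$ with constant first column and eigenvalues $0=\tilde{\lambda}_1\le\cdots\le\tilde{\lambda}_n$, observes that $\tilde{L}=\tilde{U}\tilde{\Lambda}\tilde{U}^\top$ is symmetric, positive semi-definite, and has zero row sums because the constant vector is annihilated, then solves $\tilde{L}=\tilde{D}-\tilde{A}$ for a symmetric $\tilde{A}$ with zero diagonal and $a_{ij}=-l_{ij}$, declares $\tilde{A}$ to be the adjacency matrix of $\mathcal{G}$, and closes with an appeal to the universal approximation theorem to assert learnability. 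That is exactly your ``reverse implication,'' and the obstacle you single out is the genuine soft spot of the published proof: positive semi-definiteness plus zero row sums does not force the off-diagonals $-l_{ij}$ to be nonnegative, so the paper is implicitly working with the weaker notion of Laplacian (symmetric, PSD, zero row sums, possibly signed edge weights) without ever checking the sign condition. Your forward, constructive direction --- take a connected local graph, form $L=D-A$, note $L\mathbf{1}=0$ and $x^\top Lx=\tfrac{1}{2}\sum_{i,j}w_{ij}(x_i-x_j)^2\ge 0$, and use the basis freedom in the spectral theorem to place $\tfrac{1}{\sqrt{n}}\mathbf{1}$ as the first column --- does not appear in the paper at all, yet it is the cleaner and fully rigorous way to discharge the literal existence claim; what the paper's direction buys instead is the statement it actually needs downstream, namely that \emph{whatever} $(U,\Lambda)$ the network learns corresponds to some (generalized) graph Laplacian. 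Your final caveat is also well taken: the main text defines the normalized Laplacian $I-D^{-1/2}AD^{-1/2}$, whose null vector is $D^{1/2}\mathbf{1}$ rather than the constant vector, so the theorem and both proofs really concern the combinatorial Laplacian (or regular graphs); the paper silently switches to $\tilde{D}-\tilde{A}$ in the supplement without remarking on this.
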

\begin{proof}
In supplementary.
\end{proof}

From Theorem~\ref{Theorem}, we have that the connections of graph vertices can be learned by neural networks. 
Inspired by this, a naive way is to initialize a matrix $U \in \mathbb{R}^{n \times n}$ with $\bm{u_1}=(c, c, ..., c)^\top$ and then  enforce $UU^\top=I$ during the training. That is, if the matrix Frobenius norm is small enough, $\|I - UU^\top \|_F^2 < \epsilon$, we then consider $U$ to be an orthogonal matrix. However, the complexity of the above-mentioned method is $O(n^2)$. Instead, we propose a more efficient strategy to generate a full set of normalized vectors that are orthogonal to each other. Given a normalized vector $\bm{q}=(q_1, q_2, \dots, q_n)^\top \in \mathbb{R}^{n}$, i.e., $\|\bm{q}\|_2=1$, we then define
the orthogonal matrix $U$ as

\begin{equation}
\label{U_construction_2}
U = 
\left[
\begin{array}{cccc}
q_{1}   & -q_{2}     & \cdots & -q_{n}    \\
q_{2}   & F(2, 2) + 1  & \cdots & F(2, n) \\
\vdots  & \vdots     & \ddots & \vdots    \\
q_{n}   & F(n, 2)  & \cdots & F(n, n)+1
\end{array}
\right],
\end{equation}
where
\begin{equation}\label{U_construction_1}
F(i, j) = \frac{q_i q_j}{\sum_{k=2}^{n} |q_{k}|^2} (q_{1} -1), \, 2 \leq i, j \leq n.
\end{equation}

As the first column of $U$ is identical to vector $\bm{q}$, it is also easy to verify that 
$UU^\top=I$ (see more details in our supplementary). 
With Eq.~\eqref{U_construction_2}, we can construct the orthogonal matrix $U$ using a vector $\bm{q}$ with only $n$ free variables, which facilitates the efficient learning in the spectral domain.
Therefore, different from existing methods, our method directly learns a matrix $U$ to represent the local graph connection, instead of evaluating a pairwise distance matrix. 
This strategy also provides a very insightful perspective for point cloud applications. Specifically, in a hierarchical feature representation learning framework, each layer extracts local features by aggregating neighborhood points that make up a local graph. When going deeper, the local graph is updated dynamically with a larger reception field. 
In other words, the local graph connection in each layer is learned dynamically instead of being a fixed input. This is a critical distinction between our method and graph CNNs working on a fixed input graph. 
Therefore, it substantially simplifies and speeds up the network training process. 

In addition, to guarantee that $\bm{q}$ satisfies the required condition, we use two trainable vectors, i.e., $\bm{q} = \bm{q_{ini}} + \bm{q_{eps}}$, where $\bm{q_{ini}} = (c, c, ..., c)^\top \in \mathbb{R}^{n}$ with $c \neq 0$ and $\bm{q_{eps}} = (e_{1}, e_{2}, ..., e_{n})^\top \in \mathbb{R}^{n}$. During training, we then force $\|\bm{q_{eps}}\|_1 \rightarrow 0$. 
Given $\mathcal{L}_{task}$ as the task-specific loss function for PointWavelet, we then have the loss function for PointWavelet with learnable wavelet transform, i.e., PointWavelet-L, as follows:
\begin{equation}
\mathcal{L} = \mathcal{L}_{task} + \beta \sum_{i}\|\bm{q_{eps, i}}\|_1, 
\end{equation}
where $i$ indicates the index of each layer with a learnable orthogonal matrix and $\beta \geq 0$ is the weight of the second loss term.

\section{Experiments}

In this section, we first introduce the datasets used in our experiments and the implementation details in training/testing. We then compare the proposed method with recent state-of-the-art methods on different point cloud tasks such as classification and segmentation. Lastly, we perform several ablation studies on graph wavelet transform to better understand the proposed method. If not otherwise stated, we refer to \textbf{PointWavelet} as graph wavelet transform by eigendecomposition, and \textbf{PointWavelet-L} as graph wavelet transform by the proposed learning strategy.

\subsection{Datasets}
We evaluate the proposed method on point cloud classification and segmentation using the following four popular point cloud datasets. 
\begin{enumerate}[]
\item ModelNet40~\cite{wu20153d}: It contains CAD models from 40 categories. We use the official split of 9,843 training and 2,468 testing samples.
\vspace{1mm}
\item ScanObjectNN~\cite{uy2019revisiting}: It contains about 15,000 real scanned objects that are categorized into 15 classes with 2,902 unique object instances.
\vspace{1mm}
\item ShapeNet-Part~\cite{yi2016scalable}: It contains 16,881 models from 16 shape categories, with total 50 different parts labeled. We use the official split, i.e., 14,006 for training and 2,874 for testing.
\vspace{1mm}
\item S3DIS~\cite{armeni20163d}: It contains 3D scans in six large-scale indoor areas including 271 rooms. Each point is annotated with a label from 13 semantic classes, such as chair and sofa.
\end{enumerate}

\subsection{Implementation Details}
\label{implementation_detail}

We implement the proposed method using PyTorch~\cite{paszke2019pytorch}. To train our network, we first randomly initialize the trainable vector $\lambda_{\theta_2}, \dots, \lambda_{\theta_n}$ and then have $\lambda_{i} = (tanh(\lambda_{\theta_i}) + 1)$ to ensure $\lambda_{i}> 0$. 
Similarly, we initialize $\bm{q_{ini}} = (c_{\theta}, c_{\theta}, ..., c_{\theta})^\top \in \mathbb{R}^{n}$ with $c_{\theta} \neq 0$ and $\bm{q_{eps}} = (e_{\theta_1}, e_{\theta_2}, ..., e_{\theta_n})^\top \in \mathbb{R}^{n}$.
In our experiments, if not otherwise stated, we use $\beta=0.05$, $J=5$, and the Mexican hat wavelet with the scaling function is $h(x) = e^{-x^4}$ and the wavelet function is $g(x) = xe^{-x}$. 
Since the proposed method mainly uses the set abstraction layer~\cite{qi2017pointnet++} and the transformer encoder layer~\cite{dosovitskiy2020image}, we show their detailed configurations in Table~\ref{tbl.structure_sa} and Table~\ref{tbl.structure_transformer}, respectively. For each SA layer in Table~\ref{tbl.structure_sa}, the first row is the number of centroids, the second row is the size of local neighbors and the last row is the output feature channel.

\begin{table}[h]
    \caption{The configurations of set abstraction (SA) layers.}
    \centering
    \resizebox{0.6\columnwidth}{!}{
    \begin{tabular}{ccccc}
    \toprule
     &\multirow{1}{*}{SA1} &\multirow{1}{*}{SA2}  &\multirow{1}{*}{SA3} &\multirow{1}{*}{SA4}\\
    \midrule
    \multirow{1}*{Centroids}
    &512  &128  &32   &1    \\
    \multirow{1}*{$k$-NN}
    &32   &32   &32   &32   \\
    \multirow{1}*{Channels}
    &128  &256  &512  &512   \\
    \bottomrule
    \end{tabular}}
    \label{tbl.structure_sa}
\end{table}

\begin{table}[h]
    \caption{The configurations of WaveletFormer (WF) layers.}
    \centering
    \resizebox{0.6\columnwidth}{!}{
    \begin{tabular}{ccccc}
    \toprule
     &\multirow{1}{*}{WF1} &\multirow{1}{*}{WF2}  &\multirow{1}{*}{WF3} &\multirow{1}{*}{WF4}\\
    \midrule
    \multirow{1}*{Encoders}
    &2   &2    &2    &2   \\
    \multirow{1}*{Dimension}
    &128 &256  &512  &512   \\
    \multirow{1}*{Heads}
    &4   &4    &4    &4    \\
    \bottomrule
    \end{tabular}}
    \label{tbl.structure_transformer}
\end{table}

\subsection{Point Cloud Classification}
We first evaluate our method on ModelNet40 dataset for point cloud classification.
For fair comparison, we use 1024 points sampled from mesh objects in ModelNet40 without normals as input, and the evaluation metrics are overall accuracy (OA) and average class accuracy (mAcc). We compare with both spatial-based methods like PointNet++~\cite{qi2017pointnet++} and spectral-based methods like LSGCN~\cite{wang2018local}. Experimental results presented in Table~\ref{tbl.classification.MN40} demonstrate that our PointWavelet and PointWavelet-L clearly outperform the state-of-the-art methods. Generally, PointWavelet-L achieves comparable even better performance than PointWavelet because PointWavelet-L always learns to construct the orthogonal matrix $U$ which implies the optimal local graph connections rather than that pairwise defined in PointWavelet.

\begin{table}[h]
    \caption{Point cloud classification results on ModelNet40~\cite{wu20153d}.}
    \centering
    \resizebox{\columnwidth}{!}{
    \renewcommand\arraystretch{1.1}
    \begin{tabular}{llcccc}
    \toprule
    \multicolumn{2}{c}{Model} &\multirow{1}{*}{Input} &\multirow{1}{*}{OA(\%)} &\multirow{1}{*}{mAcc(\%)}  \\
    \midrule
    \multirow{13}*{\rotatebox{-90}{Spatial-based}}
    &PointNet~\cite{qi2017pointnet} &1k P  &89.2  &86.2  \\
    \multirow{1}*{}
    &PointNet++~\cite{qi2017pointnet++} &1k P  &90.7  &-  \\
    \multirow{1}*{}
    &PointCNN~\cite{li2018pointcnn} &1k P  &92.2  &88.1  \\
    \multirow{1}*{}
    &PointConv~\cite{wu2019pointconv} &1k P+N  &92.5  &-  \\
    \multirow{1}*{}
    &KPConv~\cite{thomas2019kpconv} &7k P  &92.9  &-  \\
    \multirow{1}*{}
    &DGCNN~\cite{wang2019dynamic} &1k P  &92.9  &90.2  \\
    \multirow{1}*{}
    &DensePoint~\cite{liu2019densepoint} &1k P  &93.2  \\
    \multirow{1}*{}
    &RS-CNN~\cite{liu2019relation} &1k P  &93.6  &-  \\
    \multirow{1}*{}
    &Point Trans~\cite{zhao2021point} &1k P  &93.7  &90.6  \\
    \multirow{1}*{}
    &PCT~\cite{guo2021pct} &1k P  &93.2  &-  \\
    \multirow{1}*{}
    &CurveNet~\cite{xiang2021walk} &1k P  &93.8  &-  \\
    \multirow{1}*{}
    &PointNeXt~\cite{qian2022pointnext} &1k P &93.2 &90.8 \\
    \multirow{1}*{}
    &PointMixer~\cite{choe2022pointmixer} &1k P &93.6 &\textbf{91.4} \\
    \multirow{1}*{}
    &PointMLP~\cite{ma2022rethinking} &1k P  &94.1  &90.9  \\
    \midrule
    \multirow{7}*{\rotatebox{-90}{Spectral-based}}
    &LSGCN~\cite{wang2018local} &1k P+N  &91.8  &-  \\
    \multirow{1}*{}
    &PointAGCN~\cite{chen2018pointagcn} &1k P  &91.4  &-  \\
    \multirow{1}*{}
    &PointGCN~\cite{zhang2018graph} &1k P  &89.5  &86.1  \\
    \multirow{1}*{}
    &RGCNN~\cite{te2018rgcnn} &1k P+N  &90.5  &87.3  \\
    \multirow{1}*{}
    &PointNGCNN~\cite{lu2020pointngcnn} &1k P  &92.8  &-  \\
    \multirow{1}*{}
    &AWT-Net~\cite{huang2021adaptive} &1k P  &93.9  &-  \\
    \multirow{1}*{}
    &PointWavelet &1k P  &94.1  &91.1  \\
    \multirow{1}*{}
    &PointWavelet-L &1k P  &\textbf{94.3}  &91.1  \\
    \bottomrule
    \end{tabular}}
    \label{tbl.classification.MN40}
\end{table}

The objects in ModelNet40 are synthetic, and thus are complete, well-segmented, and noise-free. However, objects obtained from real-world 3D scans are significantly different from CAD models due to the presence of background noise and the non-uniform density due to holes from incomplete scans/reconstructions and occlusions. To further validate our method, we perform experiemnt on a real-world dataset ScanObjectNN and follow PointMLP~\cite{ma2022rethinking} to use 1024 points as input without normals.
As shown in Table~\ref{tbl.classification.ScanObjectNN}, our method also achieve competitive performance with recent state-of-art methods.

\begin{table}[!ht]
    \caption{Point cloud classification results on ScanObjectNN~\cite{uy2019revisiting}.}
    \centering
    \resizebox{0.7\columnwidth}{!}{
    \renewcommand\arraystretch{1.1}
    \begin{tabular}{lccc}
    \toprule
    \multicolumn{1}{c}{Model} &\multirow{1}{*}{OA(\%)} &\multirow{1}{*}{mAcc(\%)}  \\
    \midrule
    \multirow{1}*{}
    PointNet~\cite{qi2017pointnet} &68.2  &63.4  \\
    \multirow{1}*{}
    SpiderCNN~\cite{xu2018spidercnn} &73.7 &69.8  \\
    \multirow{1}*{}
    PointNet++~\cite{qi2017pointnet++} &77.9  &75.4  \\
    \multirow{1}*{}
    PointCNN~\cite{li2018pointcnn} &78.5  &75.1  \\
    \multirow{1}*{}
    DGCNN~\cite{wang2019dynamic} &78.1  &73.6  \\
    \multirow{1}*{}
    SimpleView~\cite{goyal2021revisiting} &80.1  &-  \\
    \multirow{1}*{}
    PRANet~\cite{cheng2021net} &82.1  &79.1  \\
    \multirow{1}*{}
    PointMLP~\cite{ma2022rethinking} &85.4  &83.9  \\
    \multirow{1}*{}
    PointNeXt~\cite{qian2022pointnext} &87.7 &85.8 \\
    \multirow{1}*{}
    PointWavelet &\textbf{87.9}  &85.5  \\
    \multirow{1}*{}
    PointWavelet-L &87.7  &\textbf{85.8}  \\
    \bottomrule
    \end{tabular}}
    \label{tbl.classification.ScanObjectNN}
\end{table}

\subsection{Point Cloud Part Segmentation}
Point cloud segmentation is to assign a label to each point in the set, which is more challenging than classification. We evaluate the proposed method for the segmentation on the ShapeNet-Part dataset, and the experimental setup is the same as~\cite{qi2017pointnet++}. 
We use the input with 2048 points and the evaluation metrics are average instance intersection-over-union(Inst. mIoU) and average class intersection-over-union (Cls. mIoU). The experimental results are shown in Table~\ref{tbl.segmentation}, where the proposed method achieves better performances than both recent state-of-the-art spatial and spectral methods. In addition, we show the part segmentation results of two representative methods, one spatial-based (PointNet++) and one spectral-based (LSGCN), in Fig~\ref{fig.segmentation}. Specifically, compared to PointNet++ and LSGCN, our approach gives results that are closer to the ground truth and better captures fine local structures, such as the empennage of the airplane and the feet of the chair. Generally, graph wavelet transform provides a more faithful representation of changes in local geometry, which allows us to successfully segment connected parts of 3D objects.

\begin{table}[!ht]
    \caption{Point cloud part segmentation on ShapeNet-Part~\cite{yi2016scalable}.}
    \centering
    \resizebox{\columnwidth}{!}{
    \centering
    \renewcommand\arraystretch{1.1}
    \begin{tabular}{llcccc}
    \toprule
    \multicolumn{2}{c}{Model} &\multirow{1}{*}{Inst. mIoU(\%)} &\multirow{1}{*}{Cls. mIoU(\%)} \\
    \midrule
    \multirow{13}*{\rotatebox{-90}{Spatial-based}}
    &PointNet~\cite{qi2017pointnet} &83.7  &80.4  \\
    \multirow{1}*{}
    &PointNet++~\cite{qi2017pointnet++} &85.1  &81.9  \\
    \multirow{1}*{}
    &PointCNN~\cite{li2018pointcnn} &86.1  &84.6  \\
    \multirow{1}*{}
    &PointConv~\cite{wu2019pointconv} &85.7  &82.8  \\
    \multirow{1}*{}
    &KPConv~\cite{thomas2019kpconv} &86.4  &85.1  \\
    \multirow{1}*{}
    &DGCNN~\cite{wang2019dynamic} &85.2  &82.3  \\
    \multirow{1}*{}
    &DensePoint~\cite{liu2019densepoint} &86.4  &84.2  \\
    \multirow{1}*{}
    &RS-CNN~\cite{liu2019relation} &86.2  &84.0  \\
    \multirow{1}*{}
    &Kd-Net~\cite{klokov2017escape} &82.3  &-  \\
    \multirow{1}*{}
    &SO-Net~\cite{li2018so} &84.9  &-  \\
    \multirow{1}*{}
    &Point Trans~\cite{zhao2021point} &86.6  &83.7  \\
    \multirow{1}*{}
    &PCT~\cite{guo2021pct} &86.4  &-  \\
    \multirow{1}*{}
    &CurveNet~\cite{xiang2021walk} &85.9  &-  \\
    \multirow{1}*{}
    &PointMLP~\cite{ma2022rethinking} &86.1  &84.6  \\
    \multirow{1}*{}
    &PointNeXt~\cite{qian2022pointnext} &\textbf{87.0}  &85.2  \\
    \midrule
    \multirow{7}*{\rotatebox{-90}{Spectral-based}}
    &SyncSpec~\cite{yi2017syncspeccnn} &84.7  &82.0  \\
    \multirow{1}*{}
    &LSGCN~\cite{wang2018local} &85.4  &-  \\
    \multirow{1}*{}
    &PointAGCN~\cite{chen2018pointagcn} &85.6  &-  \\
    \multirow{1}*{}
    &RGCNN~\cite{te2018rgcnn} &84.3  &-  \\
    \multirow{1}*{}
    &PointNGCNN~\cite{lu2020pointngcnn} &85.6  &82.4  \\
    \multirow{1}*{}
    &AWT-Net~\cite{huang2021adaptive} &86.6  &85.0  \\
    \multirow{1}*{}
    &PointWavelet &86.8  &85.0  \\
    \multirow{1}*{}
    &PointWavelet-L &86.8  &\textbf{85.2}  \\
    \bottomrule
    \end{tabular}}
    \label{tbl.segmentation}
\end{table}

\begin{figure}[!ht]
    \subfloat[PN++]{
        \begin{minipage}[b]{0.23\linewidth} 
        \includegraphics[width=1.0\columnwidth]{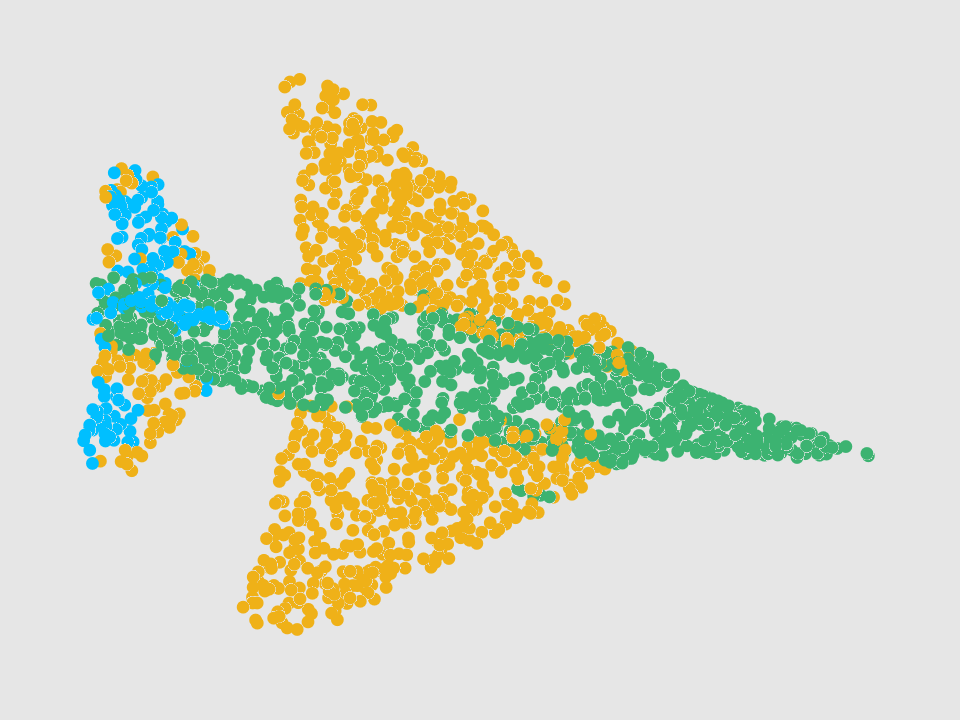} \\
        \vspace{-3mm}
        \includegraphics[width=1.0\columnwidth]{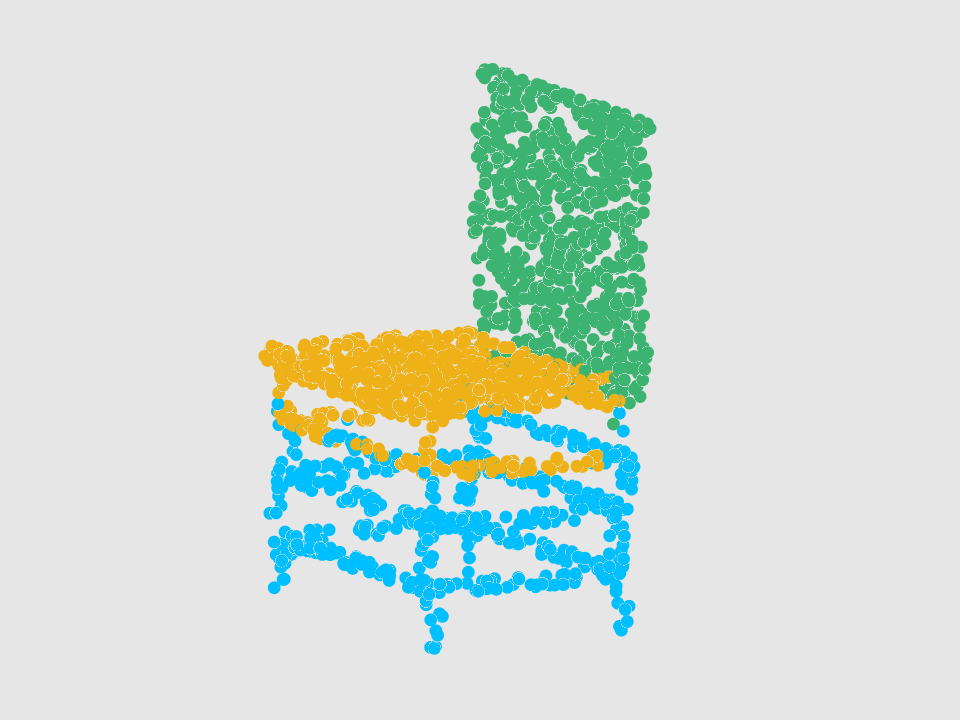} \\
        \vspace{-3mm}
        \includegraphics[width=1.0\columnwidth]{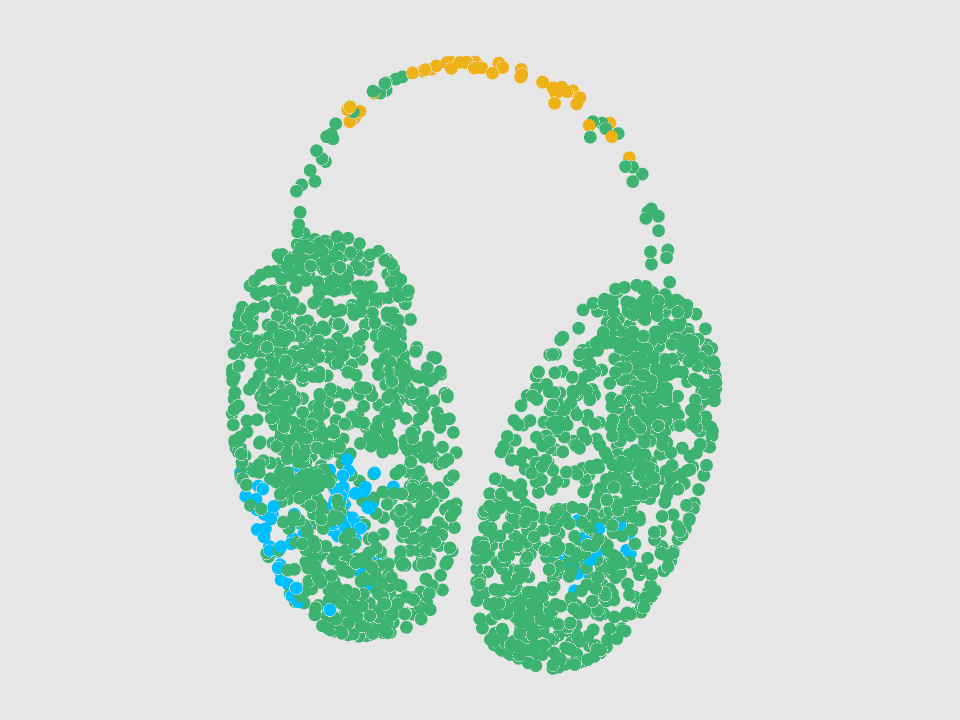}
        \end{minipage}
    }
    \subfloat[LSGCN]{
        \begin{minipage}[b]{0.23\linewidth} 
        \includegraphics[width=1.0\columnwidth]{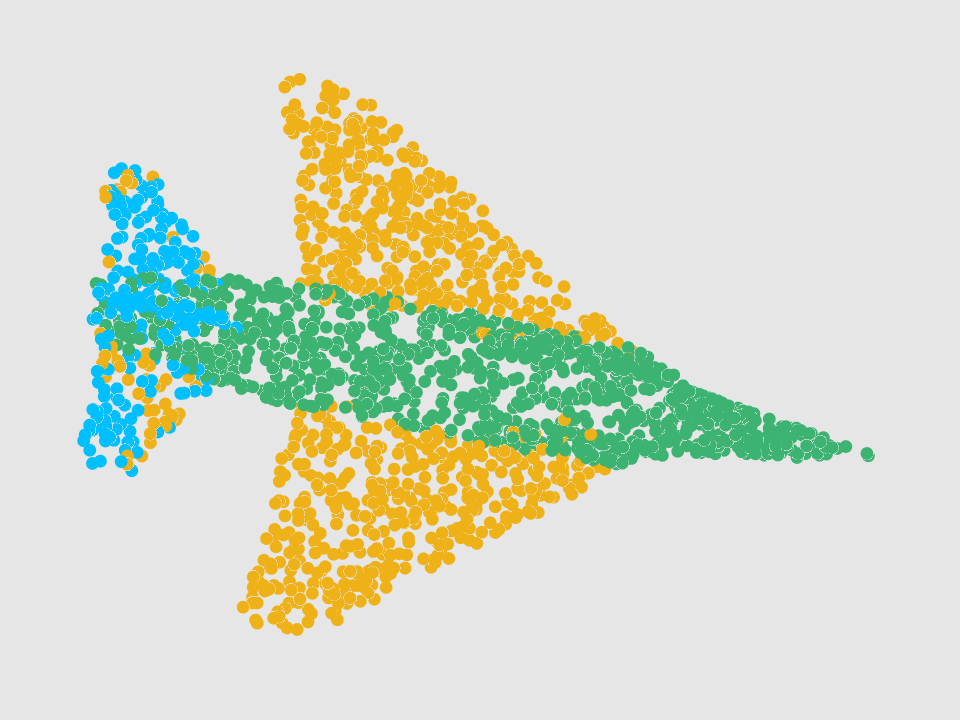} \\
        \vspace{-3mm}
        \includegraphics[width=1.0\columnwidth]{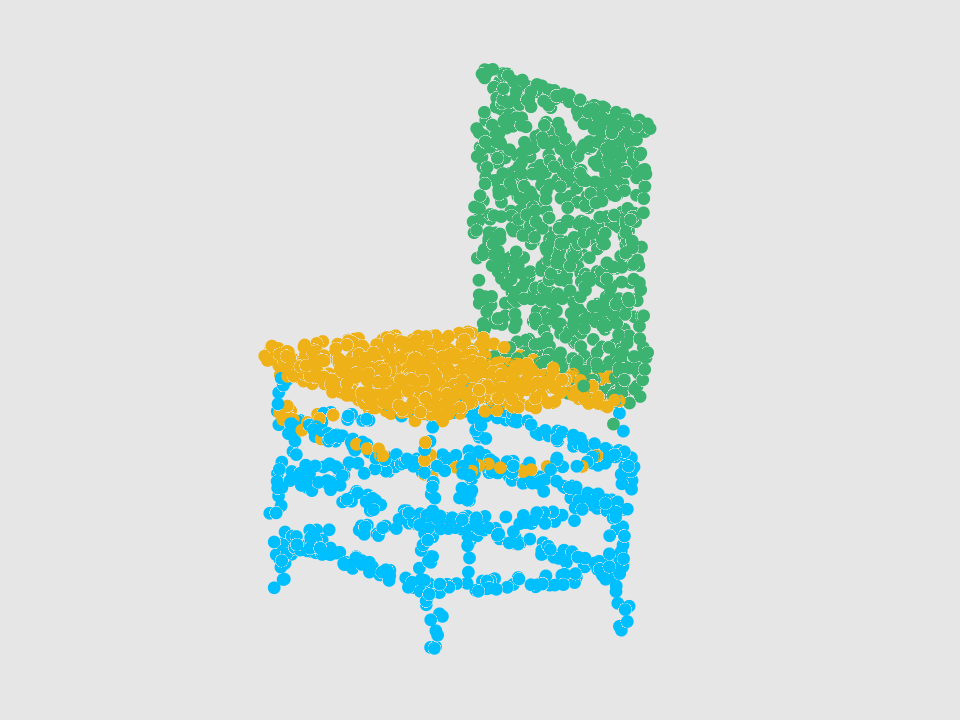} \\
        \vspace{-3mm}
        \includegraphics[width=1.0\columnwidth]{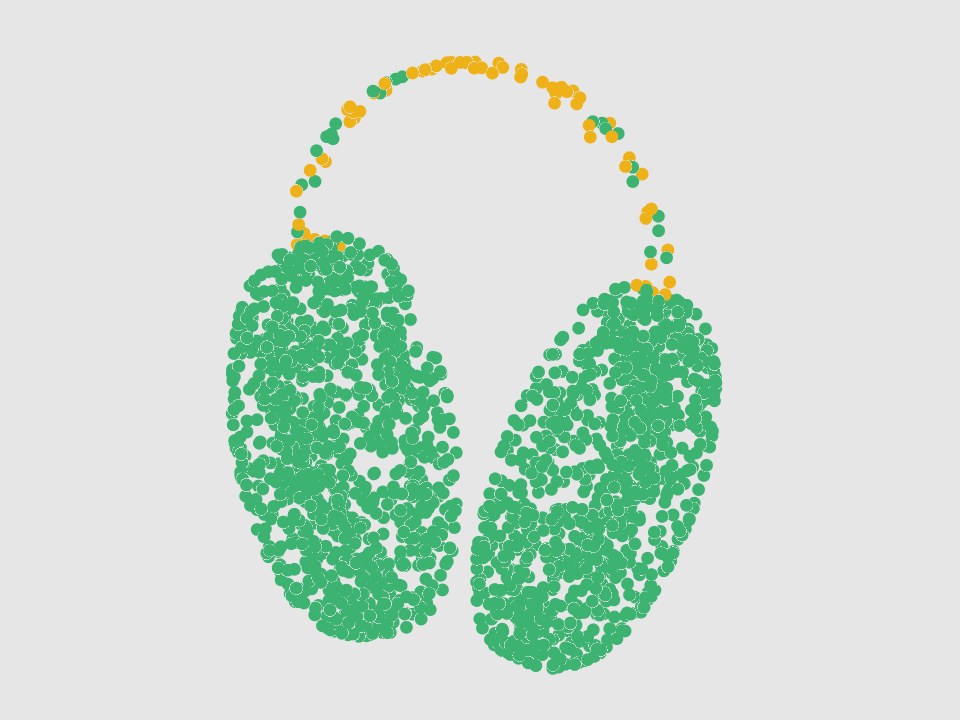}
        \end{minipage}
    }
    \subfloat[Ours]{
        \begin{minipage}[b]{0.23\linewidth} 
        \includegraphics[width=1.0\columnwidth]{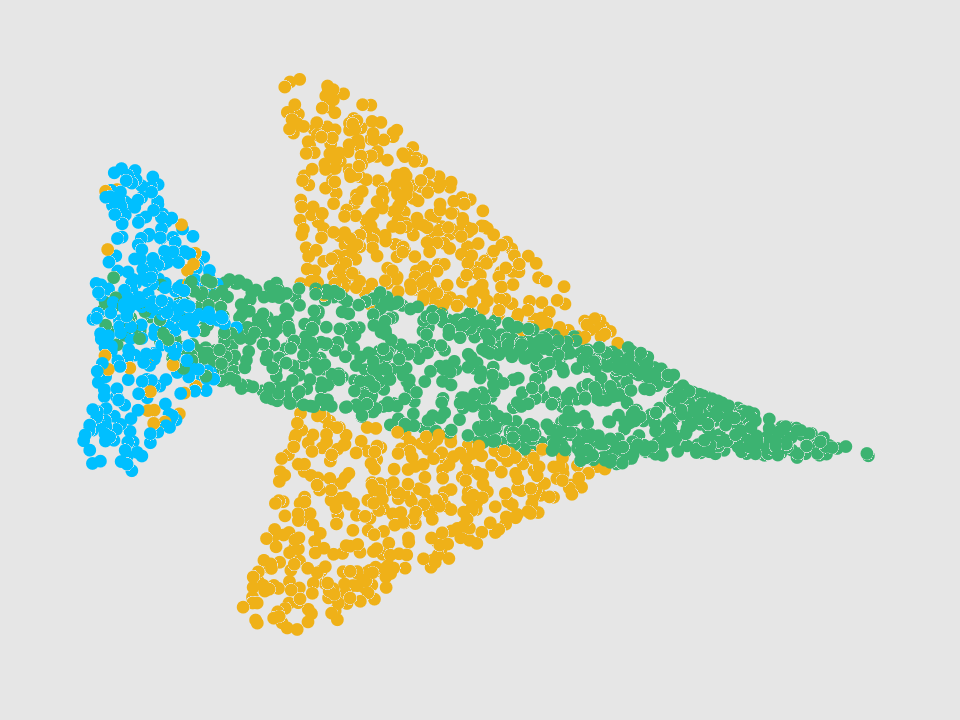} \\
        \vspace{-3mm}
        \includegraphics[width=1.0\columnwidth]{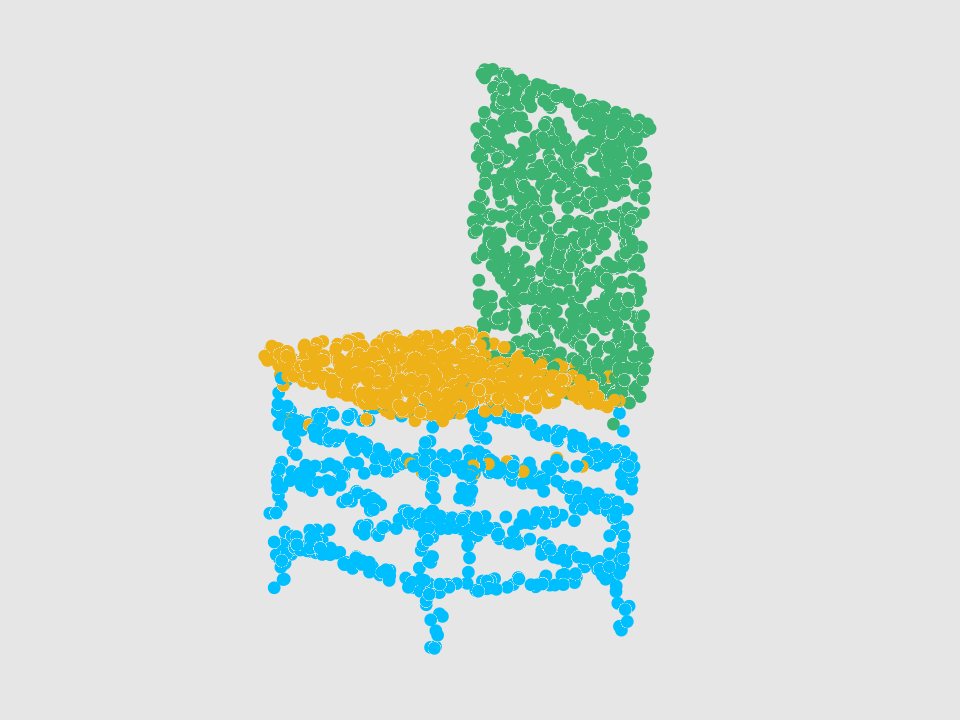} \\
        \vspace{-3mm}
        \includegraphics[width=1.0\columnwidth]{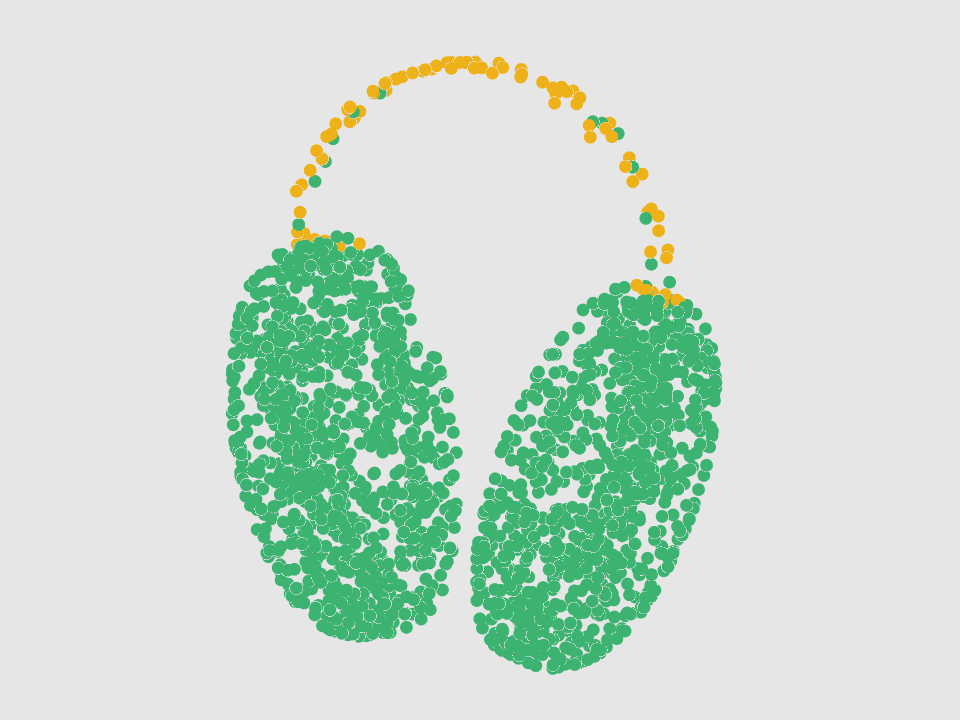}
        \end{minipage}
    }
    \subfloat[GT]{
        \begin{minipage}[b]{0.23\linewidth} 
        \includegraphics[width=1.0\columnwidth]{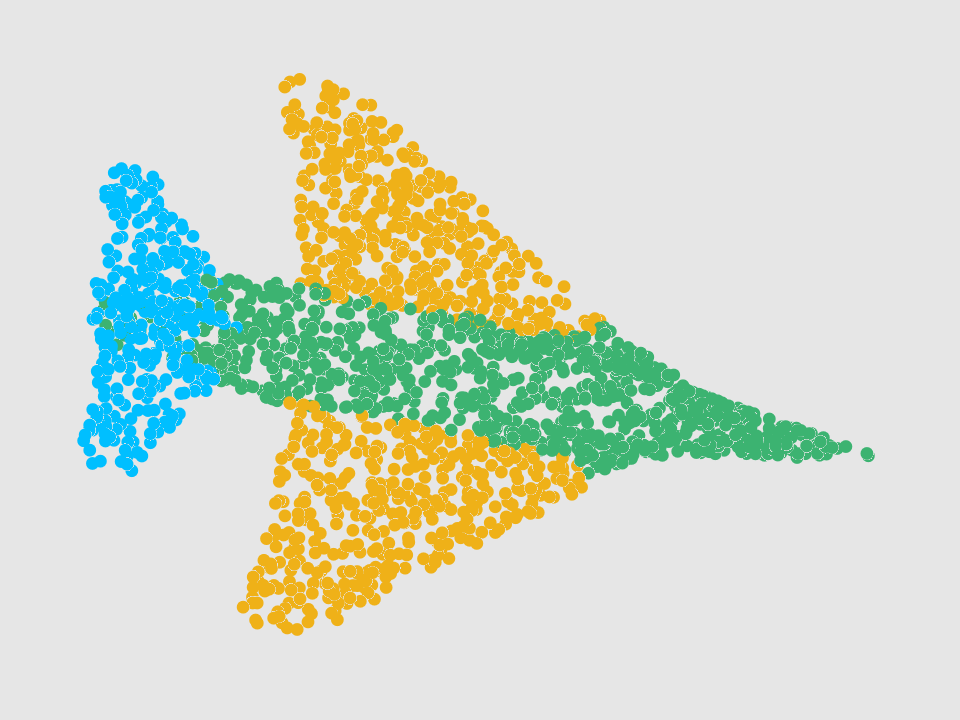} \\
        \vspace{-3mm}
        \includegraphics[width=1.0\columnwidth]{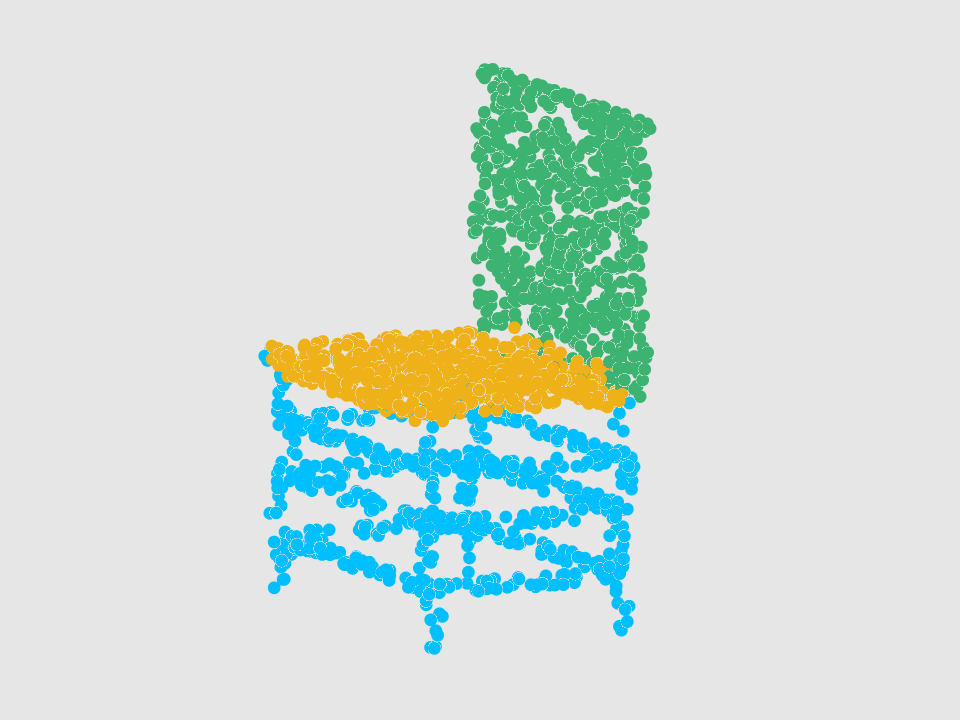} \\
        \vspace{-3mm}
        \includegraphics[width=1.0\columnwidth]{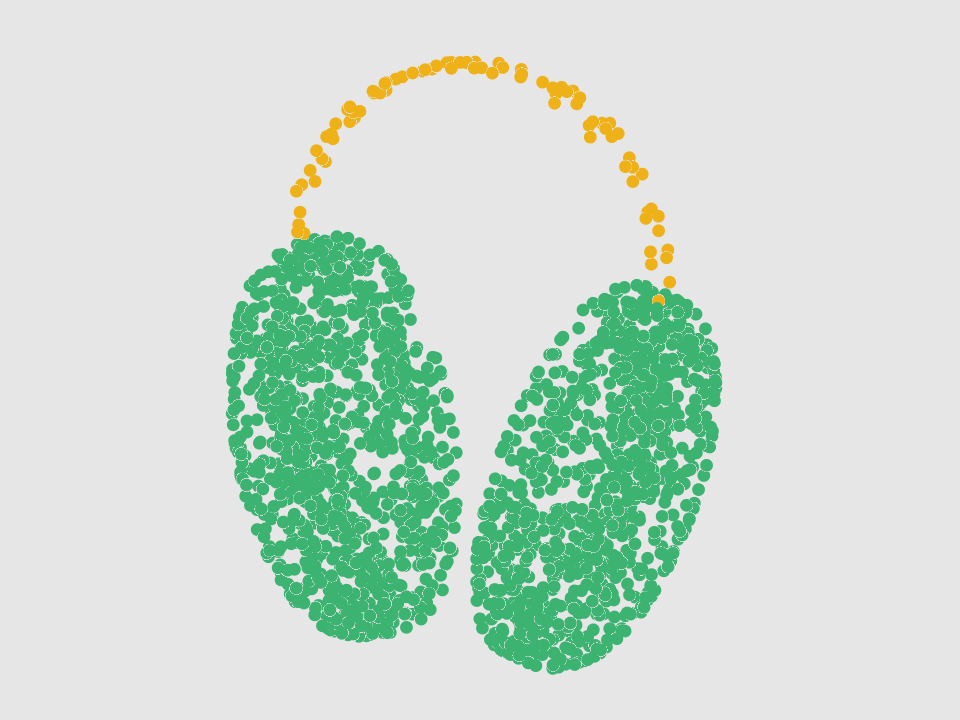}
        \end{minipage}
    }
    \caption{Visualization results on the ShapeNet-Part dataset using three different categories: airplane, chair and earphone. From left to right: PointNet++ (PN++), LSGCN, PointWavelet (Ours), and the ground truth (GT).}
    \label{fig.segmentation}
\end{figure}

\subsection{Point Cloud Semantic Segmentation}
Most recent state-of-the-art methods can obtain very good performances on synthetic datasets such as ModelNet40 and ShapeNet. To further evaluate our method on large-scale point cloud analysis, we show the experimental results of semantic indoor scene segmentation on the S3DIS dataset. 
In our experiment, we follow the same settings as PointNet~\cite{qi2017pointnet}, i.e., each room is first split into blocks and then each block is moved to the local coordinate system defined by its center. Each block uses 4,096 points during training process and all points are used for testing. For fair comparison, the results in S3DIS Area 5 are reported in Table~\ref{tbl.semantic_scene}, where the proposed method achieves either better or comparable performances with recent state-of-the-art methods.
\begin{table}[!ht]
    \caption{Point cloud semantic segmentation on S3DIS~\cite{armeni20163d}.}
    \centering
    \resizebox{0.6\columnwidth}{!}{
    \renewcommand\arraystretch{1.1}
    \begin{tabular}{lc}
    \toprule
    \multicolumn{1}{c}{Model} &\multirow{1}{*}{mIOU(\%)}\\
    \midrule
    \multirow{1}*{PointNet~\cite{qi2017pointnet}}
    &41.1  \\
    \multirow{1}*{DGCNN~\cite{wang2019dynamic}}
    &47.9  \\
    \multirow{1}*{PointAGCN~\cite{chen2018pointagcn}}
    &52.3  \\
    \multirow{1}*{PointCNN~\cite{li2018pointcnn}}
    &57.3  \\
    \multirow{1}*{KPConv~\cite{thomas2019kpconv}}
    &67.1  \\
    \multirow{1}*{Fast Point Trans\cite{park2022fast}}
    &70.3  \\
    \multirow{1}*{Point Trans~\cite{zhao2021point}}
    &70.4  \\
    \multirow{1}*{PointNeXt~\cite{qian2022pointnext}}
    &70.5  \\
    \multirow{1}*{PointMixer~\cite{choe2022pointmixer}}
    &\textbf{71.4}  \\
    \multirow{1}*{PointWavelet}
    &71.2  \\
    \multirow{1}*{PointWavelet-L}
    &71.3  \\
    \bottomrule
    \end{tabular}}
    \label{tbl.semantic_scene}
\end{table}

\begin{figure*}[!ht]
    \centering
    \subfloat[WF2]{
        \begin{minipage}[b]{0.32\linewidth} 
        \includegraphics[width=1\textwidth]{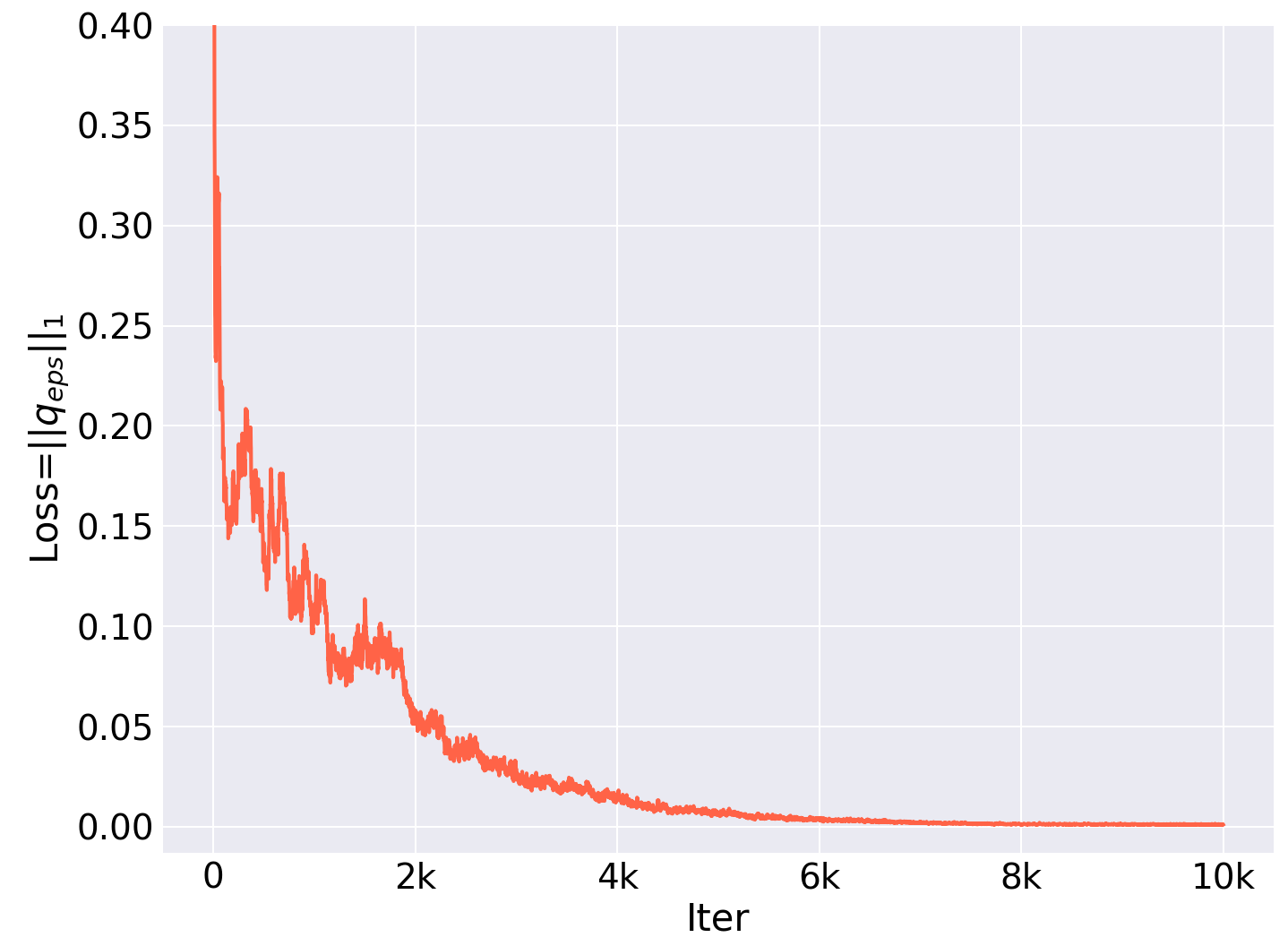}
        \end{minipage}
    }
    \subfloat[WF3]{
        \begin{minipage}[b]{0.32\linewidth} 
        \includegraphics[width=1\textwidth]{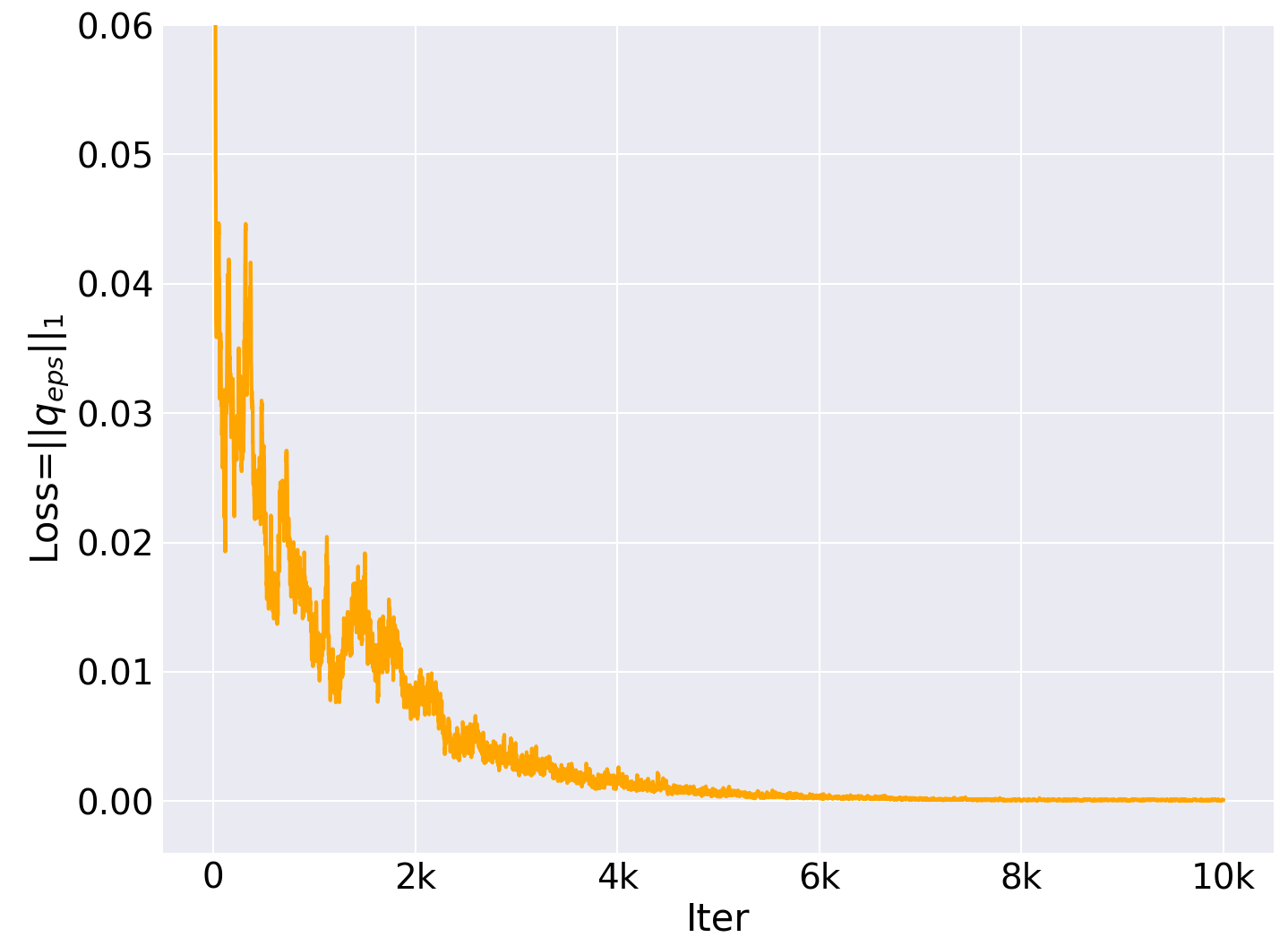}
        \end{minipage}
    }
    \subfloat[WF4]{
        \begin{minipage}[b]{0.32\linewidth} 
        \includegraphics[width=\textwidth]{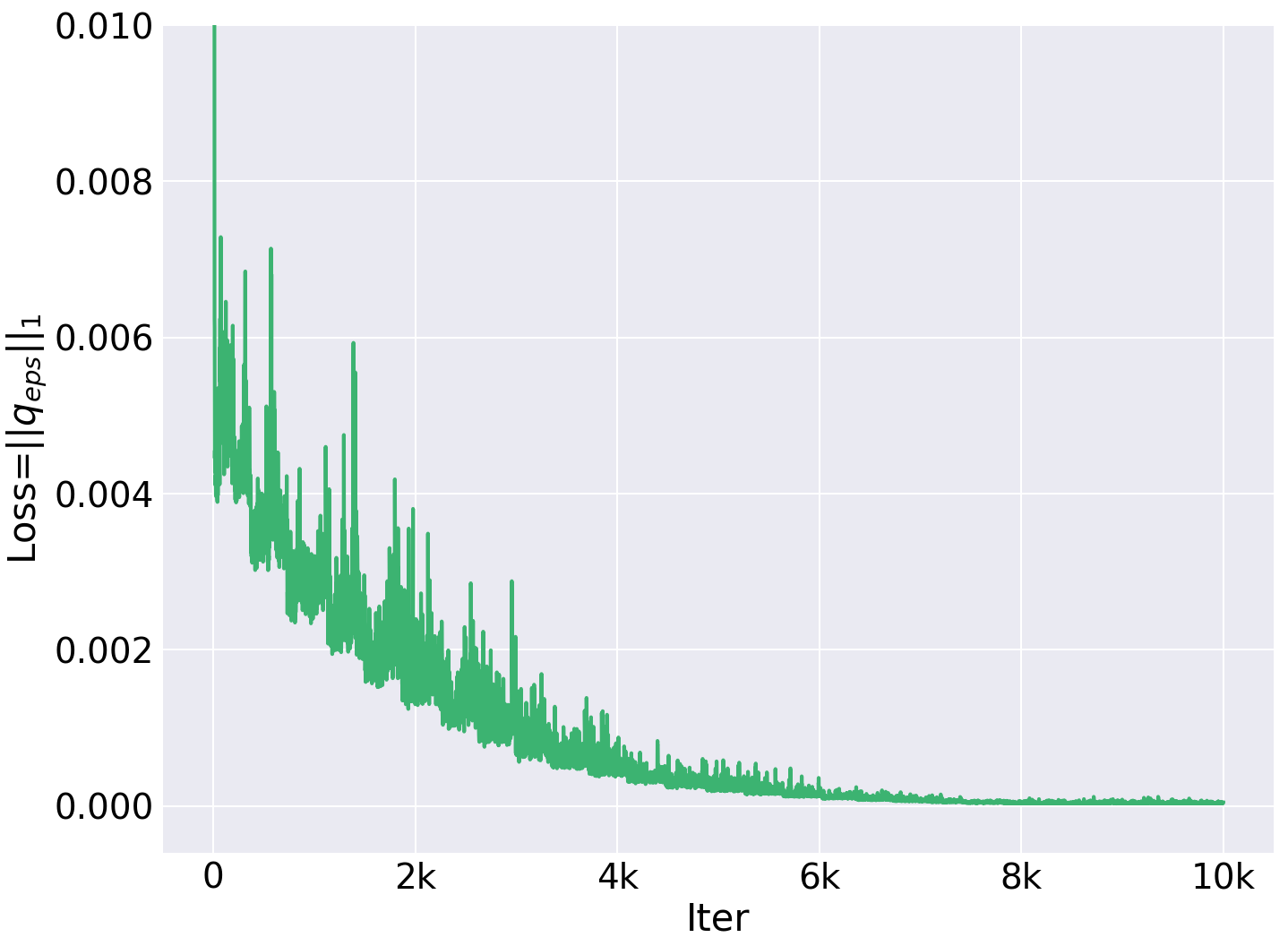}
        \end{minipage}
    }
    \caption{The $\|\bm{q_{eps}}\|_1$ loss curves of the three WaveletFormer (WF) layers. As shown, the three loss curves converge to zero rapidly.}
    \label{fig.loss_UUT}
\end{figure*}

\begin{table*}[!ht]
    \centering
    \caption{Point cloud classification results on ModelNet40 when adopting different learnable strategies.}
    \resizebox{0.8\textwidth}{!}{
    \begin{tabular}{lcccc p{1.0cm}<{\centering} p{1.0cm}<{\centering}}
    \toprule
    \multirow{2}{*}[-0.5ex]{Model} &\multirow{2}{*}[-0.5ex]{OA(\%)} &\multirow{2}{*}[-0.5ex]{mAcc(\%)} &\multirow{2}{*}[-0.5ex]{Params.(M)} &\multirow{2}{*}[-0.5ex]{FLOPs(G)}  &\multicolumn{2}{c}{Time(min)/epoch} \\
    \cmidrule(lr){6-7}
    & & & & &train &infer \\
    \midrule
    \multirow{1}*{PointWavelet}
    &94.1   &91.1 &54.70 &39.23 &18.5 &3.88 \\
    \multirow{1}*{PointWavelet-U}
    &92.5   &90.1 &60.71 &40.26 &3.05 &0.52 \\
    \multirow{1}*{PointWavelet-Che}
    &93.0   &90.2 &56.88 &39.85 &3.21 &0.63 \\
    \multirow{1}*{PointWavelet-L}
    &94.3   &91.1 &58.37 &39.16 &3.17 &0.52 \\
    \bottomrule
    \end{tabular}}
    \label{tbl.classification_UUT}
\end{table*}

\subsection{Ablation Studies}

\noindent\textbf{Loss Curve}. To better evaluate the learned orthogonal matrix $U$ in PointWavelet-L, we provide the $\|\bm{q_{eps}}\|_1$ loss curve of the three WaveletFormer layers when training on the ModelNet40 dataset (1024 points). When $\|\bm{q_{eps}}\|_1$ approaches zero, we then consider that the orthogonal matrix $U$ from Eq.~\eqref{U_construction_2} satisfies the condition of Theorem~\ref{Theorem}. As shown in Fig.~\ref{fig.loss_UUT}, the loss converges to zero after a few training iterations, indicating that the network can learn to represent the local graph efficiently. In addition, we are also interesting in how the $\|\bm{q_{eps}}\|_1$ regularization influences the performance of the network. To unveil this, we conduct another experiment to train PointWavelet-L without the $\|\bm{q_{eps}}\|_1$ loss (only $\mathcal{L}_{task}$), where we find that the network can also achieve a comparable performance, that is, 94.2\% OA on ModelNet40~\cite{wu20153d}. \\

\noindent\textbf{Different Strategies}. To validate the learnable orthogonal matrix, we not only compare the efficiency between PointWavelet and PointWavelet-L, but also another two strategies to avoid the high computational eigendecomposition operation as follows: \textbf{1) the first strategy} is to learn an orthogonal matrix directly by initializing a trainable matrix $U = [\bm{u_1}, \bm{u_2}, ..., \bm{u_n}]^\top \in \mathbb{R}^{n \times n}$ for which $u_1=(c, c, ..., c)^\top \in \mathbb{R}^{n}$ with $c \neq 0$ and enforcing $UU^\top=I$ during the training. If $\| I - UU^\top \|_F^2 < \epsilon$, where $\|\cdot \|_F$ indicates the matrix Frobenius norm, we then treat $U$ as the orthogonal matrix. We refer to this strategy as PointWavelet-U and the loss function is then defined as:
\begin{equation}
\mathcal{L} = \mathcal{L}_{task} + \beta \sum_{i}\| I - U_{i}U_{i}^\top \|_F^2, 
\end{equation}
where $\mathcal{L}_{task}$ is the loss on specific task such as classification or segmentation, and $i$ is the index of WaveletFormer layer with learnable orthogonal matrices. If not otherwise stated, we use $\beta=1.0$ in our experiments; 
\textbf{2) the second strategy}, which aims to bypass high-demanding computation, is to use truncated Chebyshev polynomials to approximate the spectral filtering~\cite{hammond2011wavelets,defferrard2016convolutional,kipf2016semi}. Specifically, the Chebyshev polynomials of the first kind is defined as
\begin{equation}
T_k(x) = cos(k \, arccosx), 
\end{equation}
and then we have the recurrence relation $T_0(x) = 1, T_1(x) = x, \cdots, T_{k+1}(x) = 2xT_k(x) - T_{k-1}(x)$. Given the input feature $f^(in)$ and Laplacian matrix $L$, the spectral graph convolution can be formulated as:
\begin{equation}
f^{(out)} = \sum_{k}\theta_k T_k(L) f^{(in)}, 
\end{equation}
where $\theta_k$ indicates the $k$-th trainable Chebyshev coefficient and $T_k(\cdot)$ is the Chebyshev polynomial of order $k$. We refer to this strategy as PointWavelet-Che and the loss function for PointWavelet-Che is $\mathcal{L}_{task}$.

\begin{figure*}[!ht]
    \centering
    \includegraphics[width=0.235\textwidth]{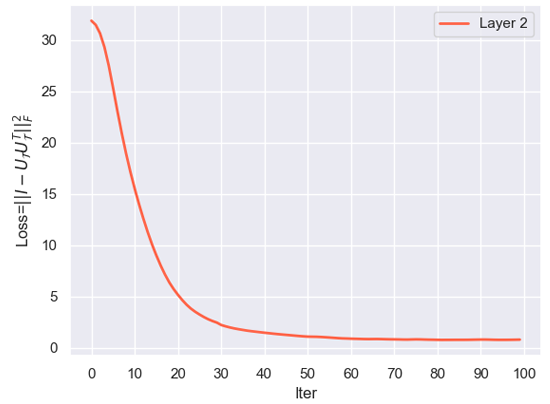}
    \hspace{1mm}
    \includegraphics[width=0.220\textwidth]{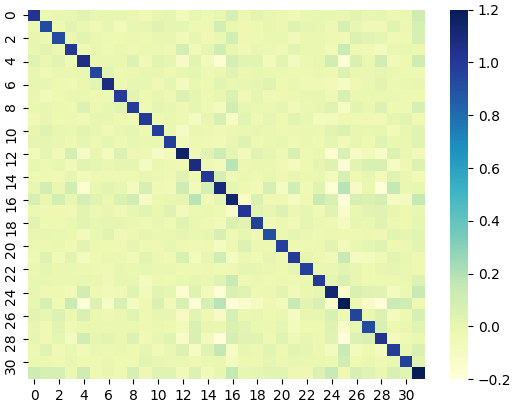}
    \hspace{1mm}
    \includegraphics[width=0.220\textwidth]{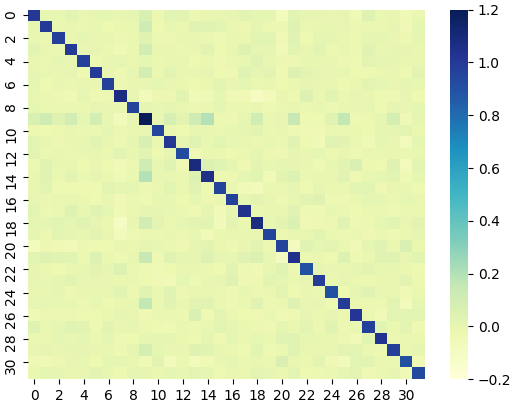}
    \hspace{1mm}
    \includegraphics[width=0.220\textwidth]{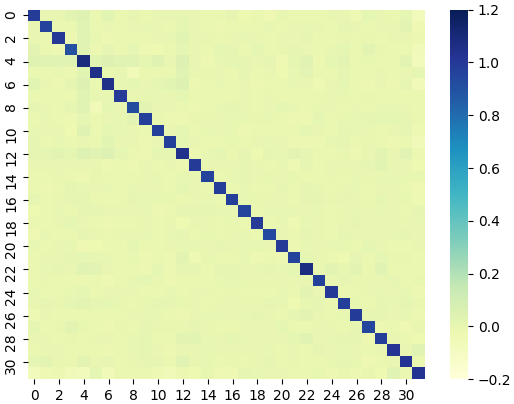} \\
    \vspace{2mm}
    \includegraphics[width=0.235\textwidth]{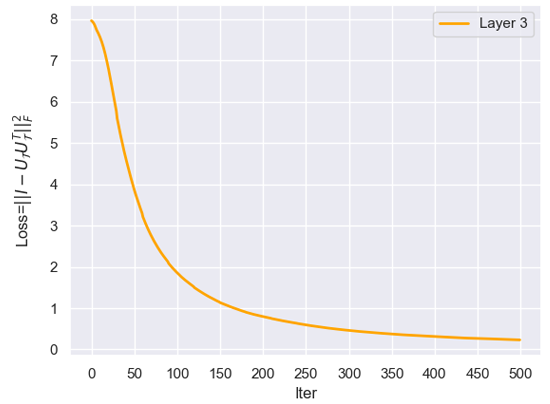}
    \hspace{1mm}
    \includegraphics[width=0.220\textwidth]{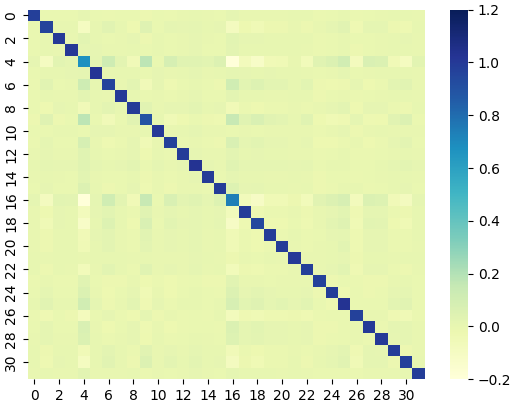}
    \hspace{1mm}
    \includegraphics[width=0.220\textwidth]{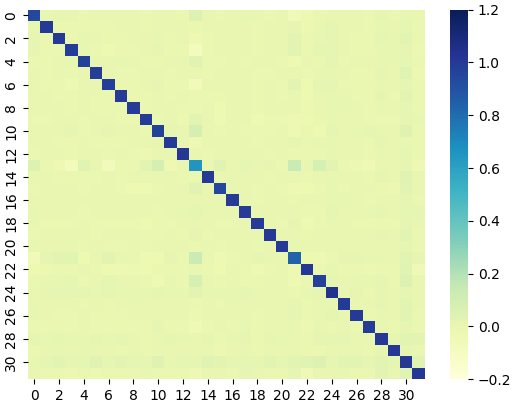}
    \hspace{1mm}
    \includegraphics[width=0.220\textwidth]{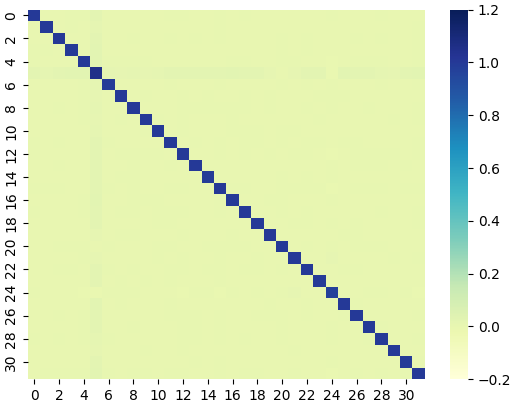} \\
    \vspace{2mm}
    \includegraphics[width=0.235\textwidth]{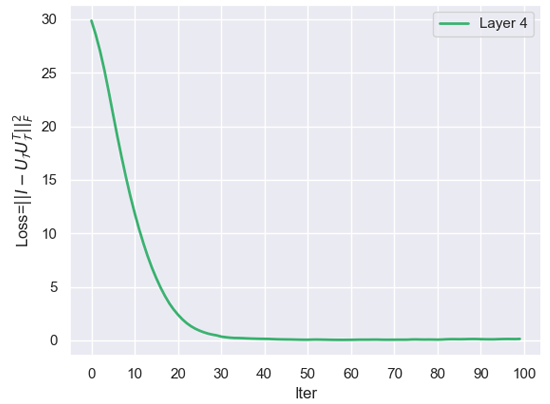}
    \hspace{1mm}
    \includegraphics[width=0.220\textwidth]{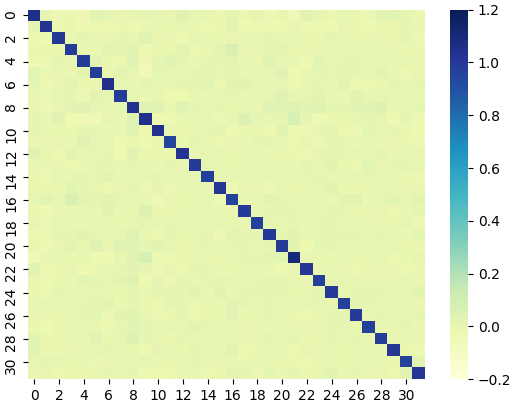}
    \hspace{1mm}
    \includegraphics[width=0.220\textwidth]{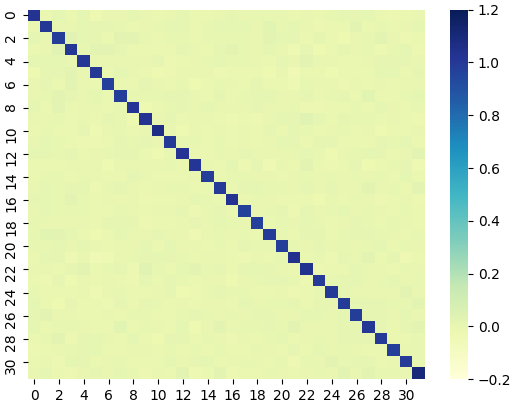}
    \hspace{1mm}
    \includegraphics[width=0.220\textwidth]{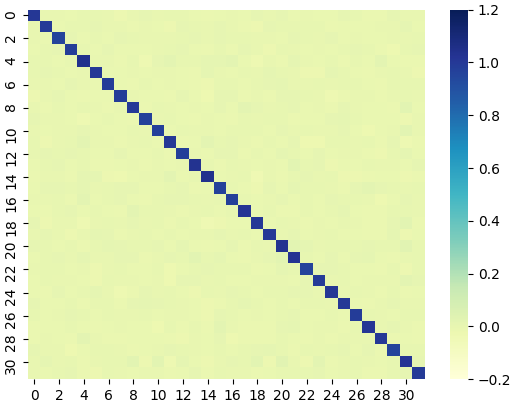}
    \caption{Visualization results of PointWavelet-U. From top to bottom: layer 2, layer 3 and layer 4; From left to right: the $\|I - UU^\top\|_F^2$ curve, the matrix product $UU^\top$ at epoch=10, 50, and 100, respectively. The corresponding quantitative results are presented in Table~\ref{tbl.UUT_deviation}. As shown, the $UU^\top$ converges to the identity matrix $I$ very fast.}
    \label{fig.UUT_directly}
\end{figure*}

For comparison, we report the classification results on ModelNet40 (1024 points). As shown in Table~\ref{tbl.classification_UUT}, we find that: 1) PointWavelet is much slow than the other three methods; 2) Though PointWavelet-U and PointWavelet-Che can accelerate the network training, they achieve inferior performances; and 3) Our PointWavelet-L achieves a better tradeoff between the speed and the performance. Additionally, we visualize the matrix product $UU^\top$ from the last three layers of PointWavelet-U and the loss curves in Fig.~\ref{fig.UUT_directly}.
In Table~\ref{tbl.UUT_deviation}, we evaluate the deviation between the identity matrix $I$ and the matrix product $UU^\top$ for the last three layers in the classification network at different training epochs.

\begin{table}[!ht]
    \caption{The deviation between the identity matrix and the matrix product on ModelNet40.}
    \centering
    \resizebox{0.9\columnwidth}{!}{
    \renewcommand\arraystretch{1.0}
    \begin{tabular}{cccc}
    \toprule
    \multirow{1}{*}{$\|I - UU^\top\|_F^2$} &\multirow{1}{*}{Epoch=10} &\multirow{1}{*}{Epoch=50} &\multirow{1}{*}{Epoch=100} \\
    \midrule
    \multirow{1}*{Layer 2}
    &2.9153    &0.9625    &0.2487 \\
    \multirow{1}*{Layer 3}
    &1.2496    &0.0818    &0.0326 \\
    \multirow{1}*{Layer 4}
    &0.2708    &0.1391    &0.0947 \\
    \bottomrule
    \end{tabular}}
    \label{tbl.UUT_deviation}
\end{table}

\noindent\textbf{Different Wavelet Scales}. When using graph wavelet transform, we need to choose the scale $J$ of wavelet functions. Therefore, we investigate how different $J$ influence the network performance and perform point cloud classification experiments on ModelNet40 (1024 points).
In Table~\ref{tbl.ablation}, we present the quantitative comparisons of PointWavelet-L, including testing overall accuracy (OA), network parameters (Params.), and floating point operations (FLOPs). In practice, we find that $J=5$ is a proper tradeoff in most cases. In addition, the large $J$ does not affect the accuracy drastically, because the low band usually dominates the frequency analysis in graph signal processing \cite{sandryhaila2013discrete,sandryhaila2014big}.

\begin{table}[!ht]
    \caption{The influence of different scales in graph wavelet transform.}
    \centering
    \resizebox{0.9\columnwidth}{!}{
    \renewcommand\arraystretch{1.0}
    \begin{tabular}{lccccc}
    \toprule
    \multirow{1}{*}{Scale} &\multirow{1}{*}{$J=3$} &\multirow{1}{*}{$J=5$} &\multirow{1}{*}{$J=7$} &\multirow{1}{*}{$J=9$} &\multirow{1}{*}{$J=11$} \\
    \midrule
    \multirow{1}*{OA(\%)}
    &93.82    &94.23    &94.05    &93.89    &93.96  \\
    \multirow{1}*{Params.(M)}
    &54.82     &58.37     &61.92     &65.48     &69.03   \\
    \multirow{1}*{FLOPs(G)}
    &24.72        &39.16        &53.61        &68.06        &82.53 \\
    \bottomrule
    \end{tabular}}
    \label{tbl.ablation}
\end{table}

\begin{figure*}[!ht]
\centering
    \subfloat[Mexican hat wavelets]{
        \begin{minipage}[b]{0.49\linewidth} 
        \includegraphics[width=1\textwidth]{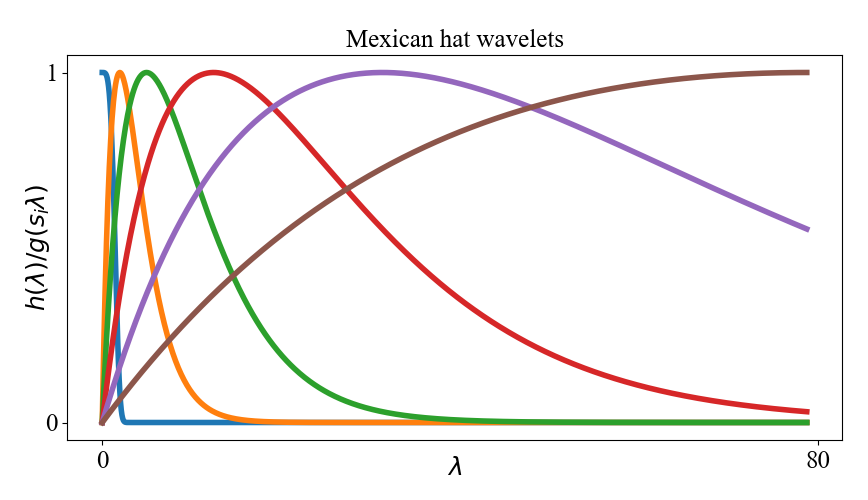}
        \end{minipage}
    }
    \subfloat[Meyer wavelets]{
        \begin{minipage}[b]{0.49\linewidth} 
        \includegraphics[width=1\textwidth]{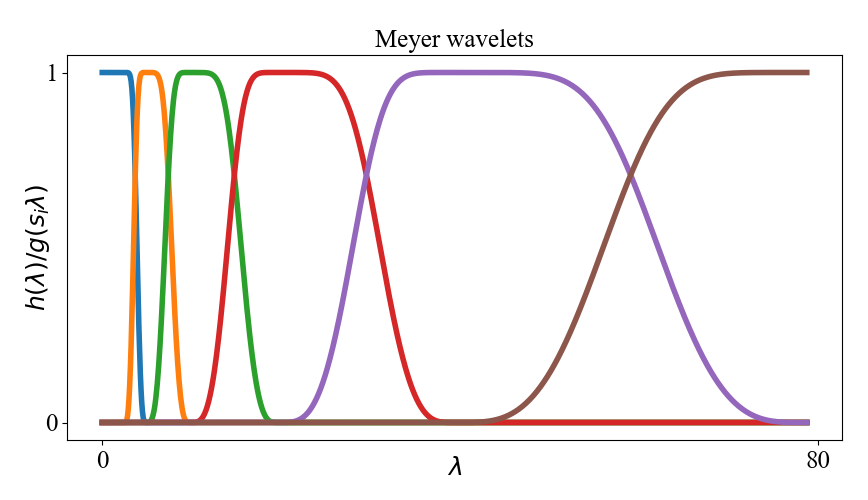}
        \end{minipage}
    }
    \caption{Frequency response of different wavelets. As shown, the wavelet functions are always stacked on the low frequency band of the spectrum.}
    \label{fig.different_wavelets}
\end{figure*}

\noindent\textbf{Different Wavelet Functions}. If not otherwise stated, we use Mexican hat wavelet in our experiments, where $h(\lambda) = e^{-x^4}$ and $
g(\lambda) = xe^{-x}$. To further demonstrate the effectiveness of our method, we also perform experiments using another popular wavelet, Meyer wavelet~\cite{vermehren2015close}, defined as
\begin{equation}
\label{meyer}
\begin{array}{l}
h(\lambda) =
\begin{cases}
\frac{1}{\sqrt{2\pi}} &\text{if} \; \lambda \leq \frac{2\pi}{3} \\
\frac{1}{\sqrt{2\pi}}cos[\frac{\pi}{2} v(\frac{3\lambda}{2\pi} - 1)] &\text{if} \; \frac{2\pi}{3} \leq \lambda \leq \frac{4\pi}{3} \\
0 &\text{otherwise}
\end{cases}
\end{array}
\end{equation}

\begin{equation}
\begin{array}{l}
g(\lambda) =
\begin{cases}
\frac{1}{\sqrt{2\pi}}sin[\frac{\pi}{2} v(\frac{3\lambda}{2\pi} - 1)]e^{\frac{\lambda}{2}j} &\text{if} \; \frac{2\pi}{3} \leq \lambda \leq \frac{4\pi}{3} \\
\frac{1}{\sqrt{2\pi}}cos[\frac{\pi}{2} v(\frac{3\lambda}{4\pi} - 1)]e^{\frac{\lambda}{2}j} &\text{if} \; \frac{4\pi}{3} \leq \lambda \leq \frac{8\pi}{3} \\
0 &\text{otherwise}
\end{cases}
\end{array}
\end{equation}
where 
\begin{equation}
\begin{array}{l}
v(x) = 
\begin{cases}
0 &\text{if} \; x < 0 \\
x &\text{if} \; 0 \leq x \leq 1 \\
1 &\text{if} \; x > 1
\end{cases}
\end{array}
\end{equation}

To show the influence of different wavelet functions on the proposed method, i.e., Mexican hat wavelets and Meyer wavelets, we perform experiments on ModelNet40 using 1024 points. As shown in Table~\ref{tbl.classification.MN40.Meyer}, we find that the proposed method can achieve comparable performance, suggesting the robustness of the proposed method to different wavelet functions. Additionally, we also show the frequency responses ($J=5$) of Mexican hat wavelets and Meyer wavelets in Fig.~\ref{fig.different_wavelets}.\\

\begin{table}[!ht]
    \caption{Point cloud classification using Mexican hat wavelets and Meyer wavelets. $*$ indicates using Meyer wavelets. }
    \centering
    \resizebox{0.6\columnwidth}{!}{
    \renewcommand\arraystretch{1.1}
    \begin{tabular}{lcc}
    \toprule
    \multicolumn{1}{c}{Model} &\multirow{1}{*}{OA(\%)} &\multirow{1}{*}{mAcc(\%)}  \\
    \midrule
    \multirow{1}*{PointWavelet}
    &94.1   &91.1  \\
    \multirow{1}*{PointWavelet-L}
    &94.3   &91.1  \\  
    \hline
    \multirow{1}*{PointWavelet$*$}
    &94.0   &90.9  \\
    \multirow{1}*{PointWavelet-L$*$}
    &94.1   &90.9  \\
    \bottomrule
    \end{tabular}}
    \label{tbl.classification.MN40.Meyer}
\end{table}

\noindent\textbf{Learnable Orthogonal Matrix}. In our paper, we devise a learnable orthogonal matrix $U$ instead of computing the eigendecomposition of Laplacian matrix directly to speed up the network training. To validate the effectiveness of this learning strategy, in this section we extend the usage of the learnable orthogonal matrix to graph Fourier transform. As LSGCN~\cite{wang2018local} uses spectral graph convolution for point cloud feature learning, we replace the traditional graph Fourier transform in LSGCN with its learnable counterpart, termed as LSGCN-L. Thus, LSGCN-L denotes graph Fourier transform by network learning according to the Theorem~\ref{Theorem}. We present the classification results on ModelNet40 (1024 points) in Table~\ref{tbl.classification.MN40.extentd}, where LSGCN-L achieves better performance than LSGCN with a much faster training/inference speed. Note that, as LSGCN is implemented by TensorFlow, thus this experiment is also conducted using TensorFlow.

\begin{table}[!ht]
    \caption{Classification results using learnable Fourier transform.}
    \centering
    \resizebox{0.7\columnwidth}{!}{
    \renewcommand\arraystretch{1.1}
    \begin{tabular}{lc p{1.0cm}<{\centering} p{1.05cm}<{\centering}}
    \toprule
    \multirow{2}{*}[-0.5ex]{Model} &\multirow{2}{*}[-0.5ex]{OA(\%)}  &\multicolumn{2}{c}{Time(min)/epoch} \\
    \cmidrule(lr){3-4}
    & &train &infer\\
    \midrule
    \multirow{1}*{LSGCN}
    &91.6   &8.06   &1.92   \\
    \multirow{1}*{LSGCN-L}
    &91.8   &1.42   &0.14   \\
    \bottomrule
    \end{tabular}}
    \label{tbl.classification.MN40.extentd}
\end{table}

\subsection{Visualization}

Our method extracts point features in the spectral domain instead of the spatial domain, and thus it is helpful to investigate the difference between the features of two domains. Considering that we use the set abstraction (SA) layer from PointNet++~\cite{qi2017pointnet++} at the feature extraction stage, thus we take PointNet++ for comparison. We use t-SNE~\cite{van2008visualizing} to visualize the embedded features of PointNet++ and PointWavelet in Fig.~\ref{fig.feature_visual}. All high dimensional features are taken from the output of grouping and sampling block (SA layer in PointNet++ and the combination of SA and WF layer in our work). As shown, the two domains of features are discriminative against each other, which indicates the spectral graph convolution network always learns different representations from its spatial counterpart.
\begin{figure}[!ht]
    \centering
    \includegraphics[width=\linewidth]{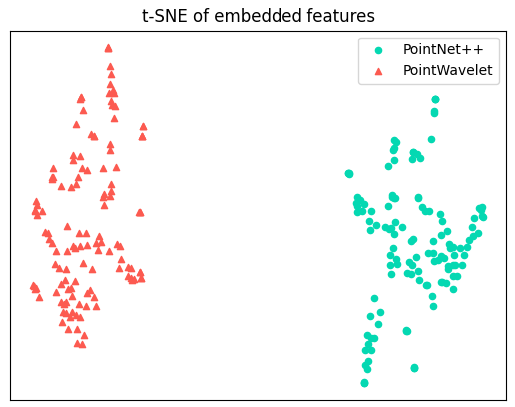}
    \caption{Visualization of the embedded features from PointNet++ and PointWavelet using t-SNE.}
    \label{fig.feature_visual}
\end{figure}

\section{Conclusion}

In this paper, we introduce a new method, PointWavelet, for point cloud analysis in the spectral domain. By utilizing the graph wavelet transform, we propose a new WaveletFormer layer to explore the relationships between spectral features at different scales. To avoid the time-consuming eigendecomposition operation, we adopt a novel learnable strategy, that is, we encourage the network to learn to represent the local graph by training a learnable orthogonal matrix. Without directly calculating the Laplacian matrix, this method significantly facilitates the overall training process. Overall, the proposed method can well capture local structural information from graph data, which presents an advance in learning-based spectral analysis for computer vision applications. 

\bibliographystyle{IEEEtran}

\bibliography{PointWavelet}

\begin{thebibliography}{10}
\providecommand{\url}[1]{#1}
\csname url@samestyle\endcsname
\providecommand{\newblock}{\relax}
\providecommand{\bibinfo}[2]{#2}
\providecommand{\BIBentrySTDinterwordspacing}{\spaceskip=0pt\relax}
\providecommand{\BIBentryALTinterwordstretchfactor}{4}
\providecommand{\BIBentryALTinterwordspacing}{\spaceskip=\fontdimen2\font plus
\BIBentryALTinterwordstretchfactor\fontdimen3\font minus
  \fontdimen4\font\relax}
\providecommand{\BIBforeignlanguage}[2]{{%
\expandafter\ifx\csname l@#1\endcsname\relax
\typeout{** WARNING: IEEEtran.bst: No hyphenation pattern has been}%
\typeout{** loaded for the language `#1'. Using the pattern for}%
\typeout{** the default language instead.}%
\else
\language=\csname l@#1\endcsname
\fi
#2}}
\providecommand{\BIBdecl}{\relax}
\BIBdecl

\bibitem{wu20153d}
Z.~Wu, S.~Song, A.~Khosla, F.~Yu, L.~Zhang, X.~Tang, and J.~Xiao, ``3d
  shapenets: A deep representation for volumetric shapes,'' in
  \emph{Proceedings of the IEEE conference on computer vision and pattern
  recognition}, 2015, pp. 1912--1920.

\bibitem{chang2015shapenet}
A.~X. Chang, T.~Funkhouser, L.~Guibas, P.~Hanrahan, Q.~Huang, Z.~Li,
  S.~Savarese, M.~Savva, S.~Song, H.~Su \emph{et~al.}, ``Shapenet: An
  information-rich 3d model repository,'' \emph{arXiv preprint
  arXiv:1512.03012}, 2015.

\bibitem{uy2019revisiting}
M.~A. Uy, Q.-H. Pham, B.-S. Hua, T.~Nguyen, and S.-K. Yeung, ``Revisiting point
  cloud classification: A new benchmark dataset and classification model on
  real-world data,'' in \emph{Proceedings of the IEEE/CVF international
  conference on computer vision}, 2019, pp. 1588--1597.

\bibitem{armeni20163d}
I.~Armeni, O.~Sener, A.~R. Zamir, H.~Jiang, I.~Brilakis, M.~Fischer, and
  S.~Savarese, ``3d semantic parsing of large-scale indoor spaces,'' in
  \emph{Proceedings of the IEEE conference on computer vision and pattern
  recognition}, 2016, pp. 1534--1543.

\bibitem{geiger2012we}
A.~Geiger, P.~Lenz, and R.~Urtasun, ``Are we ready for autonomous driving? the
  kitti vision benchmark suite,'' in \emph{2012 IEEE Conference on Computer
  Vision and Pattern Recognition}.\hskip 1em plus 0.5em minus 0.4em\relax IEEE,
  2012, pp. 3354--3361.

\bibitem{qi2017pointnet}
C.~R. Qi, H.~Su, K.~Mo, and L.~J. Guibas, ``Pointnet: Deep learning on point
  sets for 3d classification and segmentation,'' in \emph{Proceedings of the
  IEEE conference on computer vision and pattern recognition}, 2017, pp.
  652--660.

\bibitem{zhou2018voxelnet}
Y.~Zhou and O.~Tuzel, ``Voxelnet: End-to-end learning for point cloud based 3d
  object detection,'' in \emph{Proceedings of the IEEE Conference on Computer
  Vision and Pattern Recognition}, 2018, pp. 4490--4499.

\bibitem{wang2018pixel2mesh}
N.~Wang, Y.~Zhang, Z.~Li, Y.~Fu, W.~Liu, and Y.-G. Jiang, ``Pixel2mesh:
  Generating 3d mesh models from single rgb images,'' in \emph{Proceedings of
  the European Conference on Computer Vision (ECCV)}, 2018, pp. 52--67.

\bibitem{su2015multi}
H.~Su, S.~Maji, E.~Kalogerakis, and E.~Learned-Miller, ``Multi-view
  convolutional neural networks for 3d shape recognition,'' in
  \emph{Proceedings of the IEEE international conference on computer vision},
  2015, pp. 945--953.

\bibitem{lan2019robust}
Z.~Lan, Z.~J. Yew, and G.~H. Lee, ``Robust point cloud based reconstruction of
  large-scale outdoor scenes,'' in \emph{Proceedings of the IEEE Conference on
  Computer Vision and Pattern Recognition}, 2019, pp. 9690--9698.

\bibitem{hu2018semantic}
R.~Hu, C.~Wen, O.~Van~Kaick, L.~Chen, D.~Lin, D.~Cohen-Or, and H.~Huang,
  ``Semantic object reconstruction via casual handheld scanning,'' \emph{ACM
  Transactions on Graphics (TOG)}, vol.~37, no.~6, pp. 1--12, 2018.

\bibitem{nagy2018real}
B.~Nagy and C.~Benedek, ``Real-time point cloud alignment for vehicle
  localization in a high resolution 3d map,'' in \emph{Proceedings of the
  European Conference on Computer Vision (ECCV)}, 2018, pp. 0--0.

\bibitem{wirth2019pointatme}
F.~Wirth, J.~Quchl, J.~Ota, and C.~Stiller, ``Pointatme: Efficient 3d point
  cloud labeling in virtual reality,'' in \emph{2019 IEEE Intelligent Vehicles
  Symposium (IV)}.\hskip 1em plus 0.5em minus 0.4em\relax IEEE, 2019, pp.
  1693--1698.

\bibitem{qi2017pointnet++}
C.~R. Qi, L.~Yi, H.~Su, and L.~J. Guibas, ``Pointnet++: Deep hierarchical
  feature learning on point sets in a metric space,'' in \emph{Advances in
  neural information processing systems}, 2017, pp. 5099--5108.

\bibitem{li2018pointcnn}
Y.~Li, R.~Bu, M.~Sun, W.~Wu, X.~Di, and B.~Chen, ``Pointcnn: Convolution on
  x-transformed points,'' in \emph{Advances in neural information processing
  systems}, 2018, pp. 820--830.

\bibitem{wang2019dynamic}
Y.~Wang, Y.~Sun, Z.~Liu, S.~E. Sarma, M.~M. Bronstein, and J.~M. Solomon,
  ``Dynamic graph cnn for learning on point clouds,'' \emph{ACM Transactions on
  Graphics (TOG)}, vol.~38, no.~5, pp. 1--12, 2019.

\bibitem{sandryhaila2013discrete}
A.~Sandryhaila and J.~M. Moura, ``Discrete signal processing on graphs,''
  \emph{IEEE transactions on signal processing}, vol.~61, no.~7, pp.
  1644--1656, 2013.

\bibitem{shuman2013emerging}
D.~I. Shuman, S.~K. Narang, P.~Frossard, A.~Ortega, and P.~Vandergheynst, ``The
  emerging field of signal processing on graphs: Extending high-dimensional
  data analysis to networks and other irregular domains,'' \emph{IEEE signal
  processing magazine}, vol.~30, no.~3, pp. 83--98, 2013.

\bibitem{wang2018local}
C.~Wang, B.~Samari, and K.~Siddiqi, ``Local spectral graph convolution for
  point set feature learning,'' in \emph{Proceedings of the European conference
  on computer vision (ECCV)}, 2018, pp. 52--66.

\bibitem{te2018rgcnn}
G.~Te, W.~Hu, A.~Zheng, and Z.~Guo, ``Rgcnn: Regularized graph cnn for point
  cloud segmentation,'' in \emph{Proceedings of the 26th ACM international
  conference on Multimedia}, 2018, pp. 746--754.

\bibitem{chen2018pointagcn}
L.~Chen, G.~Wei, and Z.~Wang, ``Pointagcn: Adaptive spectral graph cnn for
  point cloud feature learning,'' in \emph{2018 International Conference on
  Security, Pattern Analysis, and Cybernetics (SPAC)}.\hskip 1em plus 0.5em
  minus 0.4em\relax IEEE, 2018, pp. 401--406.

\bibitem{zhang2018graph}
Y.~Zhang and M.~Rabbat, ``A graph-cnn for 3d point cloud classification,'' in
  \emph{2018 IEEE International Conference on Acoustics, Speech and Signal
  Processing (ICASSP)}.\hskip 1em plus 0.5em minus 0.4em\relax IEEE, 2018, pp.
  6279--6283.

\bibitem{lu2020pointngcnn}
Q.~Lu, C.~Chen, W.~Xie, and Y.~Luo, ``Pointngcnn: Deep convolutional networks
  on 3d point clouds with neighborhood graph filters,'' \emph{Computers \&
  Graphics}, vol.~86, pp. 42--51, 2020.

\bibitem{hammond2011wavelets}
D.~K. Hammond, P.~Vandergheynst, and R.~Gribonval, ``Wavelets on graphs via
  spectral graph theory,'' \emph{Applied and Computational Harmonic Analysis},
  vol.~30, no.~2, pp. 129--150, 2011.

\bibitem{zhang2022ms}
M.~Zhang, B.~Dong, and Q.~Li, ``Ms-gwnn: multi-scale graph wavelet neural
  network for breast cancer diagnosis,'' in \emph{2022 IEEE 19th International
  Symposium on Biomedical Imaging (ISBI)}.\hskip 1em plus 0.5em minus
  0.4em\relax IEEE, 2022, pp. 1--5.

\bibitem{vaswani2017attention}
A.~Vaswani, N.~Shazeer, N.~Parmar, J.~Uszkoreit, L.~Jones, A.~N. Gomez,
  {\L}.~Kaiser, and I.~Polosukhin, ``Attention is all you need,'' in
  \emph{Advances in Neural Information Processing Systems (NeurIPS)}, 2017, pp.
  5998--6008.

\bibitem{dosovitskiy2020image}
A.~Dosovitskiy, L.~Beyer, A.~Kolesnikov, D.~Weissenborn, X.~Zhai,
  T.~Unterthiner, M.~Dehghani, M.~Minderer, G.~Heigold, S.~Gelly \emph{et~al.},
  ``An image is worth 16x16 words: Transformers for image recognition at
  scale,'' \emph{arXiv preprint arXiv:2010.11929}, 2020.

\bibitem{bruna2013spectral}
J.~Bruna, W.~Zaremba, A.~Szlam, and Y.~LeCun, ``Spectral networks and locally
  connected networks on graphs,'' \emph{arXiv preprint arXiv:1312.6203}, 2013.

\bibitem{defferrard2016convolutional}
M.~Defferrard, X.~Bresson, and P.~Vandergheynst, ``Convolutional neural
  networks on graphs with fast localized spectral filtering,'' in
  \emph{Advances in neural information processing systems}, 2016, pp.
  3844--3852.

\bibitem{kipf2016semi}
T.~N. Kipf and M.~Welling, ``Semi-supervised classification with graph
  convolutional networks,'' \emph{arXiv preprint arXiv:1609.02907}, 2016.

\bibitem{zhao2019pointweb}
H.~Zhao, L.~Jiang, C.-W. Fu, and J.~Jia, ``Pointweb: Enhancing local
  neighborhood features for point cloud processing,'' in \emph{Proceedings of
  the IEEE/CVF Conference on Computer Vision and Pattern Recognition}, 2019,
  pp. 5565--5573.

\bibitem{yan2020pointasnl}
X.~Yan, C.~Zheng, Z.~Li, S.~Wang, and S.~Cui, ``Pointasnl: Robust point clouds
  processing using nonlocal neural networks with adaptive sampling,'' in
  \emph{Proceedings of the IEEE/CVF Conference on Computer Vision and Pattern
  Recognition}, 2020, pp. 5589--5598.

\bibitem{wen2021learning}
C.~Wen, B.~Yu, and D.~Tao, ``Learning progressive point embeddings for 3d point
  cloud generation,'' in \emph{Proceedings of the IEEE/CVF Conference on
  Computer Vision and Pattern Recognition}, 2021, pp. 10\,266--10\,275.

\bibitem{wu2019pointconv}
W.~Wu, Z.~Qi, and L.~Fuxin, ``Pointconv: Deep convolutional networks on 3d
  point clouds,'' in \emph{Proceedings of the IEEE Conference on Computer
  Vision and Pattern Recognition}, 2019, pp. 9621--9630.

\bibitem{thomas2019kpconv}
H.~Thomas, C.~R. Qi, J.-E. Deschaud, B.~Marcotegui, F.~Goulette, and L.~J.
  Guibas, ``Kpconv: Flexible and deformable convolution for point clouds,'' in
  \emph{Proceedings of the IEEE International Conference on Computer Vision},
  2019, pp. 6411--6420.

\bibitem{maturana2015voxnet}
D.~Maturana and S.~Scherer, ``Voxnet: A 3d convolutional neural network for
  real-time object recognition,'' in \emph{2015 IEEE/RSJ International
  Conference on Intelligent Robots and Systems (IROS)}.\hskip 1em plus 0.5em
  minus 0.4em\relax IEEE, 2015, pp. 922--928.

\bibitem{riegler2017octnet}
G.~Riegler, A.~Osman~Ulusoy, and A.~Geiger, ``Octnet: Learning deep 3d
  representations at high resolutions,'' in \emph{Proceedings of the IEEE
  conference on computer vision and pattern recognition}, 2017, pp. 3577--3586.

\bibitem{zhao2021point}
H.~Zhao, L.~Jiang, J.~Jia, P.~H. Torr, and V.~Koltun, ``Point transformer,'' in
  \emph{Proceedings of the IEEE/CVF International Conference on Computer
  Vision}, 2021, pp. 16\,259--16\,268.

\bibitem{guo2021pct}
M.-H. Guo, J.-X. Cai, Z.-N. Liu, T.-J. Mu, R.~R. Martin, and S.-M. Hu, ``Pct:
  Point cloud transformer,'' \emph{Computational Visual Media}, vol.~7, no.~2,
  pp. 187--199, 2021.

\bibitem{mazur2021cloud}
K.~Mazur and V.~Lempitsky, ``Cloud transformers: A universal approach to point
  cloud processing tasks,'' in \emph{Proceedings of the IEEE/CVF International
  Conference on Computer Vision}, 2021, pp. 10\,715--10\,724.

\bibitem{pan20213d}
X.~Pan, Z.~Xia, S.~Song, L.~E. Li, and G.~Huang, ``3d object detection with
  pointformer,'' in \emph{Proceedings of the IEEE/CVF Conference on Computer
  Vision and Pattern Recognition}, 2021, pp. 7463--7472.

\bibitem{yu2022point}
X.~Yu, L.~Tang, Y.~Rao, T.~Huang, J.~Zhou, and J.~Lu, ``Point-bert:
  Pre-training 3d point cloud transformers with masked point modeling,'' in
  \emph{Proceedings of the IEEE/CVF Conference on Computer Vision and Pattern
  Recognition}, 2022, pp. 19\,313--19\,322.

\bibitem{lai2022stratified}
X.~Lai, J.~Liu, L.~Jiang, L.~Wang, H.~Zhao, S.~Liu, X.~Qi, and J.~Jia,
  ``Stratified transformer for 3d point cloud segmentation,'' in
  \emph{Proceedings of the IEEE/CVF Conference on Computer Vision and Pattern
  Recognition}, 2022, pp. 8500--8509.

\bibitem{hui2021pyramid}
L.~Hui, H.~Yang, M.~Cheng, J.~Xie, and J.~Yang, ``Pyramid point cloud
  transformer for large-scale place recognition,'' in \emph{Proceedings of the
  IEEE/CVF International Conference on Computer Vision}, 2021, pp. 6098--6107.

\bibitem{levie2018cayleynets}
R.~Levie, F.~Monti, X.~Bresson, and M.~M. Bronstein, ``Cayleynets: Graph
  convolutional neural networks with complex rational spectral filters,''
  \emph{IEEE Transactions on Signal Processing}, vol.~67, no.~1, pp. 97--109,
  2018.

\bibitem{susnjara2015accelerated}
A.~Susnjara, N.~Perraudin, D.~Kressner, and P.~Vandergheynst, ``Accelerated
  filtering on graphs using lanczos method,'' \emph{arXiv preprint
  arXiv:1509.04537}, 2015.

\bibitem{yi2017syncspeccnn}
L.~Yi, H.~Su, X.~Guo, and L.~J. Guibas, ``Syncspeccnn: Synchronized spectral
  cnn for 3d shape segmentation,'' in \emph{Proceedings of the IEEE Conference
  on Computer Vision and Pattern Recognition}, 2017, pp. 2282--2290.

\bibitem{xu2019graph}
B.~Xu, H.~Shen, Q.~Cao, Y.~Qiu, and X.~Cheng, ``Graph wavelet neural network,''
  \emph{arXiv preprint arXiv:1904.07785}, 2019.

\bibitem{shen2021multi}
Y.~Shen, W.~Dai, C.~Li, J.~Zou, and H.~Xiong, ``Multi-scale graph convolutional
  network with spectral graph wavelet frame,'' \emph{IEEE Transactions on
  Signal and Information Processing over Networks}, vol.~7, pp. 595--610, 2021.

\bibitem{wang2020mgcn}
Y.~Wang, J.~Ren, D.-M. Yan, J.~Guo, X.~Zhang, and P.~Wonka, ``Mgcn: descriptor
  learning using multiscale gcns,'' \emph{ACM Transactions on Graphics (TOG)},
  vol.~39, no.~4, pp. 122--1, 2020.

\bibitem{rustamov2013wavelets}
R.~Rustamov and L.~J. Guibas, ``Wavelets on graphs via deep learning,''
  \emph{Advances in neural information processing systems}, vol.~26, 2013.

\bibitem{rodriguez2020deep}
M.~X.~B. Rodriguez, A.~Gruson, L.~Polania, S.~Fujieda, F.~Prieto, K.~Takayama,
  and T.~Hachisuka, ``Deep adaptive wavelet network,'' in \emph{Proceedings of
  the IEEE/CVF Winter Conference on Applications of Computer Vision}, 2020, pp.
  3111--3119.

\bibitem{xu2022graph}
M.~Xu, W.~Dai, C.~Li, J.~Zou, H.~Xiong, and P.~Frossard, ``Graph neural
  networks with lifting-based adaptive graph wavelets,'' \emph{IEEE
  Transactions on Signal and Information Processing over Networks}, vol.~8, pp.
  63--77, 2022.

\bibitem{huang2021adaptive}
H.~Huang and Y.~Fang, ``Adaptive wavelet transformer network for 3d shape
  representation learning,'' in \emph{International Conference on Learning
  Representations}, 2021.

\bibitem{wu2020comprehensive}
Z.~Wu, S.~Pan, F.~Chen, G.~Long, C.~Zhang, and S.~Y. Philip, ``A comprehensive
  survey on graph neural networks,'' \emph{IEEE Transactions on Neural Networks
  and Learning Systems (TNNLS)}, vol.~32, no.~1, pp. 4--24, 2020.

\bibitem{zhang2020deep}
Z.~Zhang, P.~Cui, and W.~Zhu, ``Deep learning on graphs: A survey,'' \emph{IEEE
  Transactions on Knowledge and Data Engineering}, 2020.

\bibitem{yi2016scalable}
L.~Yi, V.~G. Kim, D.~Ceylan, I.-C. Shen, M.~Yan, H.~Su, C.~Lu, Q.~Huang,
  A.~Sheffer, and L.~Guibas, ``A scalable active framework for region
  annotation in 3d shape collections,'' \emph{ACM Transactions on Graphics
  (ToG)}, vol.~35, no.~6, pp. 1--12, 2016.

\bibitem{paszke2019pytorch}
A.~Paszke, S.~Gross, F.~Massa, A.~Lerer, J.~Bradbury, G.~Chanan, T.~Killeen,
  Z.~Lin, N.~Gimelshein, L.~Antiga \emph{et~al.}, ``Pytorch: An imperative
  style, high-performance deep learning library,'' \emph{Advances in neural
  information processing systems}, vol.~32, 2019.

\bibitem{liu2019densepoint}
Y.~Liu, B.~Fan, G.~Meng, J.~Lu, S.~Xiang, and C.~Pan, ``Densepoint: Learning
  densely contextual representation for efficient point cloud processing,'' in
  \emph{Proceedings of the IEEE/CVF international conference on computer
  vision}, 2019, pp. 5239--5248.

\bibitem{liu2019relation}
Y.~Liu, B.~Fan, S.~Xiang, and C.~Pan, ``Relation-shape convolutional neural
  network for point cloud analysis,'' in \emph{Proceedings of the IEEE/CVF
  Conference on Computer Vision and Pattern Recognition}, 2019, pp. 8895--8904.

\bibitem{xiang2021walk}
T.~Xiang, C.~Zhang, Y.~Song, J.~Yu, and W.~Cai, ``Walk in the cloud: Learning
  curves for point clouds shape analysis,'' in \emph{Proceedings of the
  IEEE/CVF International Conference on Computer Vision}, 2021, pp. 915--924.

\bibitem{qian2022pointnext}
G.~Qian, Y.~Li, H.~Peng, J.~Mai, H.~A. A.~K. Hammoud, M.~Elhoseiny, and
  B.~Ghanem, ``Pointnext: Revisiting pointnet++ with improved training and
  scaling strategies,'' \emph{arXiv preprint arXiv:2206.04670}, 2022.

\bibitem{choe2022pointmixer}
J.~Choe, C.~Park, F.~Rameau, J.~Park, and I.~S. Kweon, ``Pointmixer: Mlp-mixer
  for point cloud understanding,'' in \emph{European Conference on Computer
  Vision}.\hskip 1em plus 0.5em minus 0.4em\relax Springer, 2022, pp. 620--640.

\bibitem{ma2022rethinking}
X.~Ma, C.~Qin, H.~You, H.~Ran, and Y.~Fu, ``Rethinking network design and local
  geometry in point cloud: A simple residual mlp framework,'' \emph{arXiv
  preprint arXiv:2202.07123}, 2022.

\bibitem{xu2018spidercnn}
Y.~Xu, T.~Fan, M.~Xu, L.~Zeng, and Y.~Qiao, ``Spidercnn: Deep learning on point
  sets with parameterized convolutional filters,'' in \emph{Proceedings of the
  European Conference on Computer Vision (ECCV)}, 2018, pp. 87--102.

\bibitem{goyal2021revisiting}
A.~Goyal, H.~Law, B.~Liu, A.~Newell, and J.~Deng, ``Revisiting point cloud
  shape classification with a simple and effective baseline,'' in
  \emph{International Conference on Machine Learning}.\hskip 1em plus 0.5em
  minus 0.4em\relax PMLR, 2021, pp. 3809--3820.

\bibitem{cheng2021net}
S.~Cheng, X.~Chen, X.~He, Z.~Liu, and X.~Bai, ``Pra-net: Point relation-aware
  network for 3d point cloud analysis,'' \emph{IEEE Transactions on Image
  Processing}, vol.~30, pp. 4436--4448, 2021.

\bibitem{klokov2017escape}
R.~Klokov and V.~Lempitsky, ``Escape from cells: Deep kd-networks for the
  recognition of 3d point cloud models,'' in \emph{Proceedings of the IEEE
  International Conference on Computer Vision}, 2017, pp. 863--872.

\bibitem{li2018so}
J.~Li, B.~M. Chen, and G.~Hee~Lee, ``So-net: Self-organizing network for point
  cloud analysis,'' in \emph{Proceedings of the IEEE conference on computer
  vision and pattern recognition}, 2018, pp. 9397--9406.

\bibitem{park2022fast}
C.~Park, Y.~Jeong, M.~Cho, and J.~Park, ``Fast point transformer,'' in
  \emph{Proceedings of the IEEE/CVF Conference on Computer Vision and Pattern
  Recognition}, 2022, pp. 16\,949--16\,958.

\bibitem{sandryhaila2014big}
A.~Sandryhaila and J.~M. Moura, ``Big data analysis with signal processing on
  graphs: Representation and processing of massive data sets with irregular
  structure,'' \emph{IEEE Signal Processing Magazine}, vol.~31, no.~5, pp.
  80--90, 2014.

\bibitem{vermehren2015close}
V.~Vermehren and H.~de~Oliveira, ``Close expressions for meyer wavelet and
  scale function,'' \emph{arXiv preprint arXiv:1502.00161}, 2015.

\bibitem{van2008visualizing}
L.~Van~der Maaten and G.~Hinton, ``Visualizing data using t-sne.''
  \emph{Journal of machine learning research}, vol.~9, no.~11, 2008.

\end{thebibliography}

\vspace{-30pt}
\begin{IEEEbiography}[{\includegraphics[width=1in,height=1.25in,clip,keepaspectratio]{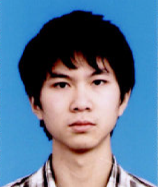}}]{Cheng Wen} received a B.S. in Mechanical Engineering and Automation from Beijing University of Posts and Telecommunications, and a M.S. in Biomedical Engineering from Tsinghua University. Now he is a Ph.D. candidate in Computer Science from The University of Sydney. His research interests include computer vision, machine learning, and deep learning. Currently, he mainly focuses on 3D computer vision, including 3D point cloud classification and segmentation.
\end{IEEEbiography}
\vspace{-30pt}
\begin{IEEEbiography}[{\includegraphics[width=1in,height=1.25in,clip,keepaspectratio]{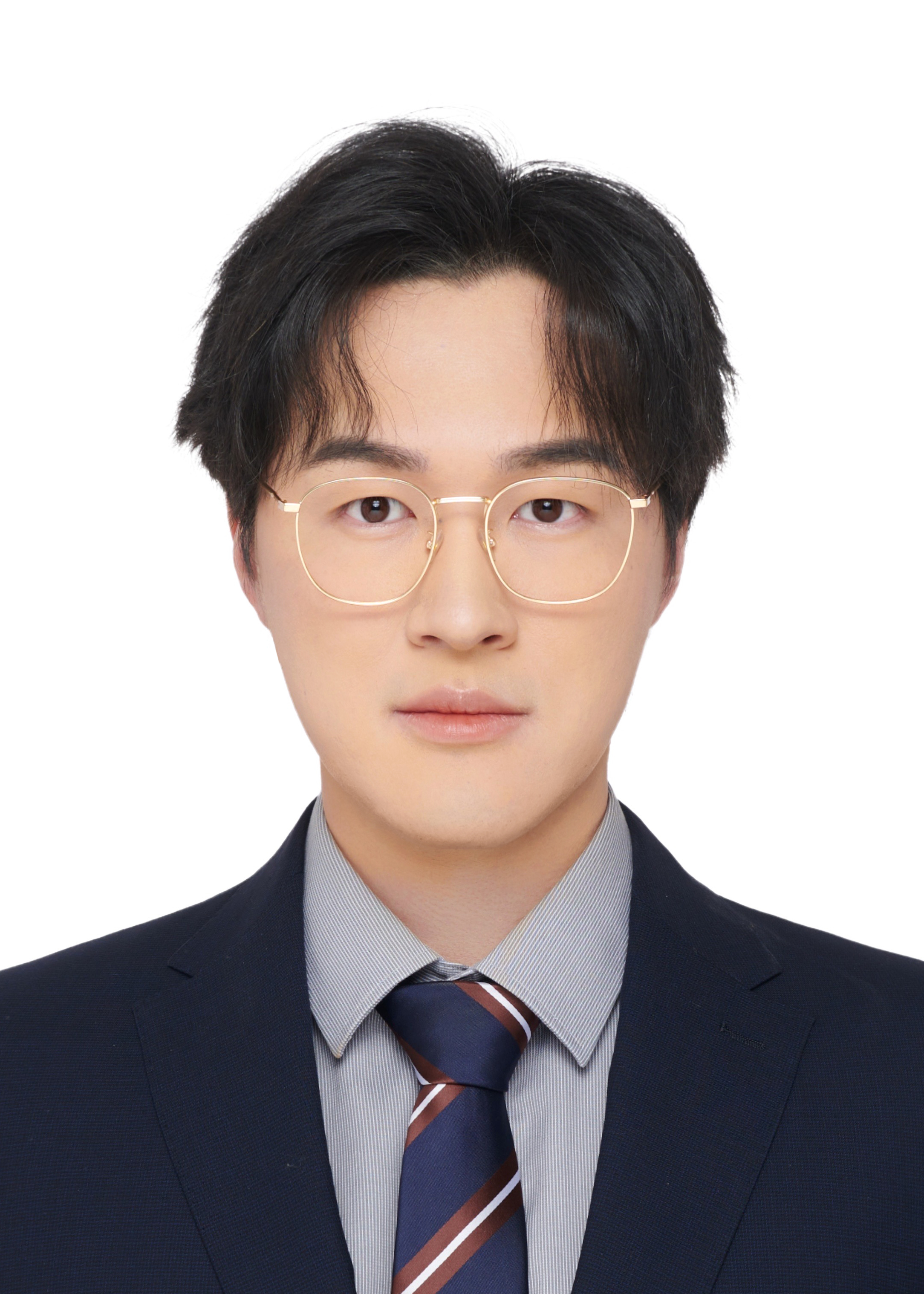}}]{Jianzhi Long} received a B.S. degree in electrical engineering from University of Illinois at Urbana-Champaign in 2020, and a M.S. degree in electrical engineering from University of California San Diego in 2022. Now he is a Ph.D. candidate in Computer Science from The University of Sydney. His research interests include computer vision and computer graphics. 
\end{IEEEbiography}
\vspace{-30pt}
\begin{IEEEbiography}[{\includegraphics[width=1in,height=1.25in,clip,keepaspectratio]{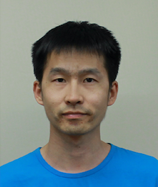}}]{Baosheng Yu} received a B.E. in Computer Science from the University of Science and Technology of China in 2014, and a Ph.D. in Computer Science from The University of Sydney in 2019. He is currently a Research Fellow in the School of Computer Science at the University of Sydney, NSW, Australia. His research interests include computer vision, machine learning, and deep learning. He has authored/co-authored more than 20 publications on top-tier international conferences and journals, including IEEE TPAMI, CVPR, ICCV and ECCV.
\end{IEEEbiography}
\vspace{-30pt}
\begin{IEEEbiography}[{\includegraphics[width=1in,height=1.25in,clip]{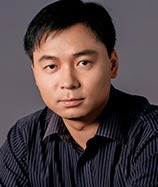}}]{Dacheng Tao (F’15)} is the President of the JD Explore Academy and a Senior Vice President of JD.com. He is also an advisor and chief scientist of the digital science institute in the University of Sydney. He mainly applies statistics and mathematics to artificial intelligence and data science, and his research is detailed in one monograph and over 200 publications in prestigious journals and proceedings at leading conferences. He received the 2015 Australian Scopus-Eureka Prize, the 2018 IEEE ICDM Research Contributions Award, and the 2021 IEEE Computer Society McCluskey Technical Achievement Award. He is a fellow of the Australian Academy of Science, AAAS, and ACM.
\end{IEEEbiography}

\begin{strip}
\begin{center}
    \Huge
    PointWavelet: Learning in Spectral Domain for 3D Point Cloud Analysis (Supplementary Material)
    \\[40pt]
\end{center}
\setcounter{section}{0}
\setcounter{theorem}{0}
\setcounter{equation}{0}

\section{Proof of Theorem 1}
\noindent

\begin{theorem}
Given a set of vertices $\mathcal{V}$ with $|\mathcal{V}| = n$, there exists one orthogonal matrix $U = [\bm{u_1}, \bm{u_2}, ..., \bm{u_n}] \in \mathbb{R}^{n \times n}$ for which $\bm{u_1}=(c, c, ..., c)^\top \in \mathbb{R}^{n}$ with $c \neq 0$, and a set of $\lambda_i$ satisfying $0 = \lambda_1 \leq \lambda_2 ... \leq \lambda_n$, such that $U \Lambda U^\top \in \mathbb{R}^{n \times n}$ forms the Laplacian matrix of local graph $\mathcal{G}$ , where $\Lambda = \mathop{\mathrm{diag}}(\lambda_1, \lambda_2, ..., \lambda_n)$.
\end{theorem}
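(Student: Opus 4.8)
The plan is to prove the statement as an existence result via the spectral theorem, reading the conditions on $U$ and $\Lambda$ as precisely the spectral signature of a graph Laplacian. First I would recall the three defining algebraic properties that a combinatorial Laplacian $L = D - A$ of a graph on $n$ vertices must satisfy: it is real symmetric, it is positive semi-definite, and it annihilates the constant vector, i.e. $L(c,\dots,c)^\top = 0$, since every row of $L$ sums to zero. These three facts are exactly what the conditions $\Lambda = \mathrm{diag}(\lambda_1,\dots,\lambda_n)$ with $0 = \lambda_1 \leq \lambda_2 \leq \dots \leq \lambda_n$ and $\bm{u_1} = (c,\dots,c)^\top$ encode after diagonalization. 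Note that it is the \emph{combinatorial} Laplacian, rather than the normalized one of Eq.~\eqref{normalized_Laplacian_matrix}, whose kernel is spanned by the constant vector; this is the version compatible with the stated hypothesis on $\bm{u_1}$.

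The construction then proceeds as follows. Fix any connected local graph $\mathcal{G}$ on $\mathcal{V}$ (for instance the symmetrized $k$-NN graph used throughout the paper) and form its Laplacian $L$. Because $L$ is real symmetric, the spectral theorem furnishes an orthonormal eigenbasis, so $L = U\Lambda U^\top$ with $U$ orthogonal and $\Lambda$ diagonal. I would order the eigenvalues ascending; connectivity guarantees a simple zero eigenvalue whose eigenspace is spanned by the constant vector, so I may set $\lambda_1 = 0$ and choose $\bm{u_1} = \tfrac{1}{\sqrt{n}}(1,\dots,1)^\top$, giving $c = 1/\sqrt{n} \neq 0$. The remaining columns are any orthonormal eigenvectors associated with $\lambda_2 \leq \dots \leq \lambda_n$. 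By construction $U\Lambda U^\top = L$ is the Laplacian of $\mathcal{G}$ and every stated condition holds, which establishes existence.

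The delicate point, and the step I expect to require the most care, is the \emph{converse} reading that the learnable scheme actually exploits: that \emph{any} orthogonal $U$ with constant first column, together with \emph{any} admissible spectrum, yields a matrix $U\Lambda U^\top$ that is a legitimate Laplacian. Symmetry is immediate, positive semi-definiteness follows from $\lambda_i \geq 0$, and $U\Lambda U^\top$ annihilates the constant vector because $\bm{u_1}$ is the eigenvector for $\lambda_1 = 0$ and is orthogonal to all other columns, so the row sums vanish. What is \emph{not} automatic is the sign pattern of a combinatorial Laplacian (nonpositive off-diagonal entries), so to make the universal statement rigorous I would adopt the generalized-Laplacian convention of graph signal processing, under which a symmetric positive semi-definite matrix with the constant vector in its kernel is exactly the Laplacian of a possibly signed, weighted graph. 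Under that convention the three verified properties suffice, and the one free parameter $c$ is pinned down by the normalization $\|\bm{u_1}\|_2 = 1$.
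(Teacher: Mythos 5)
Your proof is correct but runs in the opposite direction from the paper's. The paper's supplementary proof starts from an \emph{arbitrary} orthogonal $\tilde U$ with constant first column and an arbitrary admissible spectrum, forms $\tilde L = \tilde U \tilde\Lambda \tilde U^\top$, observes that $\tilde L\bm{\tilde u}_1 = 0$ forces every row of $\tilde L$ to sum to zero, and then manufactures a graph by setting $a_{ij} = -l_{ij}$ for $i \neq j$ and $a_{ii} = 0$, so that $\tilde L = \tilde D - \tilde A$ with $\tilde A$ read off as an adjacency matrix; it closes by invoking the universal approximation theorem to argue that these edge weights can be realized by a trained network. That is exactly the ``converse reading'' you isolate in your final paragraph, and your critique of it is on target: the paper never verifies that the off-diagonal entries of $\tilde L$ are nonpositive, so $\tilde A$ is in general the adjacency matrix of a \emph{signed} weighted graph, and the construction is only rigorous under the generalized-Laplacian convention you propose. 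Your primary argument --- fix a connected graph on $\mathcal V$, take its combinatorial Laplacian, and apply the spectral theorem to produce $U$ and $\Lambda$ with $\bm{u_1} = n^{-1/2}(1,\dots,1)^\top$ --- is a cleaner and fully rigorous route to the literal existence claim, and it dispenses with the universal-approximation step entirely. You are also right that the constant-vector kernel belongs to the combinatorial Laplacian $D - A$ rather than the normalized Laplacian of Eq.~\eqref{normalized_Laplacian_matrix} used in the paper's preliminaries; the paper's proof silently switches conventions without comment. What your spectral-theorem argument alone does not deliver is the property the learnable scheme actually relies on --- that \emph{every} learned pair $(U,\Lambda)$ of the prescribed form corresponds to some graph --- but your closing paragraph supplies precisely that, with the sign-pattern caveat the paper omits made explicit.
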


\begin{proof}
Given an arbitrary orthogonal matrix $\tilde{U} = [\bm{\tilde{u}_1}, \bm{\tilde{u}_2}, ..., \bm{\tilde{u}_n}] \in \mathbb{R}^{n \times n}$ for which $\bm{\tilde{u}_1}=[\tilde{c}, \tilde{c}, ..., \tilde{c}]^T \in \mathbb{R}^{n}$ with $\tilde{c} \neq 0$, and a set of $\tilde{\lambda}_i, i=1, 2, ..., n$ with $0 = \tilde{\lambda}_1 \leq \tilde{\lambda}_2 ... \leq \tilde{\lambda}_n$, we have $\tilde{L} = \tilde{U}\tilde{\Lambda} \tilde{U}^\top$, where $\tilde{\Lambda} = diag(\tilde{\lambda}_1, \tilde{\lambda}_2, ..., \tilde{\lambda}_n) \in \mathbb{R}^{n \times n}$. For $\tilde{U}^{-1} = \tilde{U}^\top$, thus $\tilde{U} \tilde{\Lambda} \tilde{U}^\top = \tilde{U} \tilde{\Lambda} \tilde{U}^{-1}$ is the eigen-decomposition of $\tilde{L}$. Apparently, $\tilde{L}$ is symmetric and positive semi-definite. Let $\tilde{L}$ to be
\begin{equation}
\tilde{L} = \left[
\begin{array}{cccc}
l_{11} & l_{12} & \cdots & l_{1n} \\
l_{21} & l_{22} & \cdots & l_{2n} \\
\vdots & \vdots & \ddots & \vdots \\
l_{n1} & l_{n2} & \cdots & l_{nn}
\end{array}
\right]
\end{equation}
and we have
$\tilde{L}\bm{\tilde{u}_1} = \tilde{\lambda}_1 \bm{\tilde{u}_1} =  0, \text{which means} \sum_{j=1}^{n}l_{ij} = 0, i=1, 2, \cdots, n.
$ \\ 
If we suppose that there is a matrix $\tilde{A} \in \mathbb{R}^{n \times n}$ and a diagonal matrix $\tilde{D} \in \mathbb{R}^{n \times n}$, where
\begin{equation}
\tilde{A} = \left[
\begin{array}{cccc}
a_{11} & a_{12} & \cdots & a_{1n} \\
a_{21} & a_{22} & \cdots & a_{2n} \\
\vdots & \vdots & \ddots & \vdots \\
a_{n1} & a_{n2} & \cdots & a_{nn}
\end{array}
\right]
\end{equation}
and 
\begin{equation}
\tilde{D} = \left[
\begin{array}{cccc}
\sum_{i=1}^{n}a_{1i} & & &  \\
& \sum_{i=1}^{n}a_{2i} & &  \\
& & \ddots &  \\
& & & \sum_{i=1}^{n}a_{ni}
\end{array}
\right].
\end{equation}
Let $\tilde{L} = \tilde{D} - \tilde{A}$, we then have
\begin{small}
\begin{equation}
\left[
\begin{array}{cccc}
l_{11} & l_{12} & \cdots & l_{1n} \\
l_{21} & l_{22} & \cdots & l_{2n} \\
\vdots & \vdots & \ddots & \vdots \\
l_{n1} & l_{n2} & \cdots & l_{nn}
\end{array}
\right]
=
\left[
\begin{array}{cccc}
\sum_{i=1}^{n}a_{1i} & & &  \\
& \sum_{i=1}^{n}a_{2i} & &  \\
& & \ddots &  \\
& & & \sum_{i=1}^{n}a_{ni}
\end{array}
\right]
-
\left[
\begin{array}{cccc}
a_{11} & a_{12} & \cdots & a_{1n} \\
a_{21} & a_{22} & \cdots & a_{2n} \\
\vdots & \vdots & \ddots & \vdots \\
a_{n1} & a_{n2} & \cdots & a_{nn}
\end{array}
\right].
\end{equation}
\end{small}
Taken the first row for example, we have \\
\begin{equation}
\left\{
\begin{array}{lcl}
l_{11} &=& \sum_{i=1}^{n}a_{1i}- a_{11} \\
l_{12} &=& - a_{12} \\
&\vdots& \\
l_{1n} &=& - a_{1n}
\end{array}
\right.
\Longrightarrow
\left[
\begin{array}{cccc}
0 & 1 & \cdots & 1 \\
0 & -1 & \cdots & 0 \\
\vdots & \vdots & \ddots & \vdots \\
0 & 0 & \cdots & -1
\end{array}
\right]
\left[
\begin{array}{cccc}
a_{11} \\
a_{12} \\
\vdots \\
a_{1n}
\end{array}
\right]
=
\left[
\begin{array}{cccc}
l_{11} \\
l_{12} \\
\vdots \\
l_{1n}
\end{array}
\right]
\end{equation}
and
\begin{equation}
rank(
\left[
\begin{array}{ccccc}
0 & 1 & \cdots & 1 & l_{11} \\
0 & -1 & \cdots & 0 & l_{12} \\
\vdots & \vdots & \ddots & \vdots  &\vdots \\
0 & 0 & \cdots & -1 &l_{1n}
\end{array}
\right]
) = n - 1
\Longrightarrow
\left\{
\begin{array}{rcl}
a_{11} &=& 0 \\
a_{12} &=& -l_{12} \\
&\vdots& \\
a_{1n} &=& -l_{1n} \\
\end{array}
\right..
\end{equation}
Here, $a_{11}$ can be an arbitrary real number, we set $a_{11} = 0$ for simplicity. Therefore, we have
\begin{equation}
\tilde{A} = 
\left[
\begin{array}{cccc}
      0 & -l_{12} & \cdots & -l_{1n} \\
-l_{21} &       0 & \cdots & -l_{2n} \\
 \vdots &  \vdots & \ddots &  \vdots \\
-l_{n1} & -l_{n2} & \cdots & 0
\end{array}
\right]
\end{equation}
Note that, since $\tilde{L}$ is symmetric, $\tilde{A}$ is also symmetric. Apparently, we can treat $\tilde{A}$ as the adjacency matrix of $\mathcal{V}$. Let $\mathbb{H}(\mathcal{V}):=\{\bm{F}: \mathcal{V} \rightarrow \mathbb{R}\}$ be the space of real vertex functions. Since $\mathcal{V}$ is a finite set, any vertex function $\bm{F} \in \mathbb{H}(\mathcal{V})$ can be represented as a $n$-dimensional vector $\bm{F} \in \mathbb{R}^n$. Let
\begin{equation}
\bm{F}(\mathcal{V}) = 
\left\{
\begin{array}{ll}
F(x_i, x_i) = 0,       &i = 1, 2, \cdots, n \\
F(x_i, x_j) = -l_{ij}, &i, j = 1, 2, \cdots, n, \text{and} \: i \neq j
\end{array}
\right.
\end{equation}
and according to universal approximation theorem \cite{hornik1991approximation}, this function can be approximated by neural network. Thus the $U$ and $\lambda_l$ must exist, and $U = \tilde{U}$ and $\lambda_l = \tilde{\lambda}_l$ can be learned by parameters tuning.
\end{proof}

\section{Computation of $(\Psi^\top \Psi)^{-1}$}

Let $p(x) = h^2(x) + g^2(s_1x) + \cdots + g^2(s_Jx)$, we then have 
\begin{equation}
(\Psi^\top \Psi)^{-1} = U p_{\lambda}^{-1} U^\top,
\end{equation}
where $p_{\lambda}^{-1} = diag(p^{-1}(\lambda_1), p^{-1}(\lambda_2), \cdots, p^{-1}(\lambda_n)) \in \mathbb{R}^{n \times n}$. We present the details of the computation.

\begin{proof}
According to Equation (6) of the main paper, we have
\begin{equation}
\Psi = \left[
\begin{array}{c}
\Psi_0 \\ 
\Psi_{s_i} \\ 
\vdots \\ 
\Psi_{s_J}
\end{array}
\right]
\end{equation}
and
\begin{equation}
\Psi^\top \Psi = [\Psi_0^\top, \Psi_{s_i}^\top, \cdots, \Psi_{s_i}^\top]
\left[
\begin{array}{c}
\Psi_0 \\ 
\Psi_{s_i} \\ 
\vdots \\ 
\Psi_{s_J}
\end{array}
\right]
= \Psi_0^\top \Psi_0 + \Psi_{s_1}^\top \Psi_{s_1} + \cdots + \Psi_{s_J}^\top \Psi_{s_J}.
\end{equation}
Taken $\Psi_0^\top \Psi_0$ for example, we have
\begin{equation}
\begin{aligned}
\Psi_0^\top \Psi_0 
&= (U \left[
\begin{array}{cccc}
h(\lambda_1) & & &  \\
& h(\lambda_2) & &  \\
& & \ddots &  \\
& & & h(\lambda_n)
\end{array}
\right] U^\top)^\top U \left[
\begin{array}{cccc}
h(\lambda_1) & & &  \\
& h(\lambda_2) & &  \\
& & \ddots &  \\
& & & h(\lambda_n)
\end{array}
\right] U^\top \\
&= U \left[
\begin{array}{cccc}
h^2(\lambda_1) & & &  \\
& h^2(\lambda_2) & &  \\
& & \ddots &  \\
& & & h^2(\lambda_n)
\end{array}
\right] U^\top.
\end{aligned}
\end{equation}
Let $p(x) = h^2(x) + g^2(s_1x) + \cdots + g^2(s_Jx)$, we have \\
\begin{equation}
\Psi^\top \Psi = 
U \left[
\begin{array}{cccc}
p(\lambda_1) & & &  \\
& p(\lambda_2) & &  \\
& & \ddots &  \\
& & & p(\lambda_n)
\end{array}
\right] U^\top
\end{equation}
and
\begin{equation}
(\Psi^\top \Psi)^{-1} = 
U \left[
\begin{array}{cccc}
p^{-1}(\lambda_1) & & &  \\
& p^{-1}(\lambda_2) & &  \\
& & \ddots &  \\
& & & p^{-1}(\lambda_n)
\end{array}
\right] U^\top,
\end{equation}
where $p^{-1}(\lambda_i) = (h^2(\lambda_i) + g^2(s_1\lambda_i) + \cdots + g^2(s_J\lambda_i))^{-1}, i=1, 2, \cdots, n$.
\end{proof}

\section{The Orthogonality of Matrix Constructed from Vector}

In our main paper, we obtain the orthogonal matrix $U$ from a vector $\bm{q}$. Therefore, we further give a brief proof of its orthogonality as follows.
\begin{proof}
Given the normalized vector $\bm{q}=(q_1, q_2, ..., q_n)^\top \in \mathbb{R}^{n}$ with $||\bm{q}||_2 = 1$, then
\begin{equation}
U = [\bm{u_1}, \bm{u_2}, ..., \bm{u_n}] = 
\left[
\begin{array}{cccc}
q_{1}   & -q_{2}     & \cdots & -q_{n}    \\
q_{2}   & F(2, 2) + 1  & \cdots & F(2, n) \\
\vdots  & \vdots     & \ddots & \vdots    \\
q_{n}   & F(n, 2)  & \cdots & F(n, n)+1
\end{array}
\right],
\end{equation}
where
\begin{equation}
F(i, j) = \frac{q_i q_j}{\sum_{i=2}^{n} |q_{i}|^2} (q_{1} -1).
\end{equation}
If $U$ is orthogonal, that is, $UU^\top = I$, it is equivalent to
\begin{equation}
\left\{
\begin{array}{lcl}
\bm{u_i}^\top \bm{u_i} = 1, (i=1, 2, 3, ..., n)\\
\vspace{1mm}
\bm{u_i}^\top \bm{u_j} = 0, (i \neq j)\\
\end{array}
\right..
\end{equation}
Since $\bm{u_1}$ is different from $\bm{u_i} (i=2, 3, ..., n)$, we thus compute $\bm{u_1}^\top \bm{u_1}$, $\bm{u_1}^\top \bm{u_i}(i \neq 1)$, $\bm{u_i}^\top \bm{u_i}(i \neq 1)$ and $\bm{u_i}^\top \bm{u_j}(i,j \neq 1 \ \ \text{and} \ \ i \neq j)$ as follows.
\begin{equation}
\label{eq:case1}
\begin{aligned}
\bm{u_1}^\top \bm{u_1} &=
\left[
\begin{array}{c}
q_{1} \\ 
q_{2} \\ 
q_{3} \\
\vdots \\ 
q_{n}
\end{array}
\right]^\top
\left[
\begin{array}{c}
q_{1} \\ 
q_{2} \\ 
q_{3} \\
\vdots \\ 
q_{n}
\end{array}
\right]
= |q_{1}|^2 + |q_{2}|^2 + |q_{3}|^2 + \cdots + |q_{n}|^2 
= 1
\end{aligned}
\end{equation}

\begin{equation}
\label{eq:case2}
\begin{aligned}
\bm{u_1}^\top \bm{u_i} =&
\left[
\begin{array}{c}
q_{1} \\ 
q_{2} \\ 
\vdots \\ 
q_{i} \\
\vdots \\ 
q_{n}
\end{array}
\right]^\top
\left[
\begin{array}{c}
-q_{i} \\ 
F(2, i) \\
\vdots \\
F(i, i) + 1 \\
\vdots \\ 
F(n, i)
\end{array}
\right] 
=
\left[
\begin{array}{c}
q_{1} \\ 
q_{2} \\ 
\vdots \\ 
q_{i} \\
\vdots \\ 
q_{n}
\end{array}
\right]^\top
\left[
\begin{array}{c}
-q_{i} \\ 
\frac{q_2 q_i}{\sum_{i=2}^{n} |q_{i}|^2} (q_{1} -1) \\
\vdots \\
\frac{q_i q_i}{\sum_{i=2}^{n} |q_{i}|^2} (q_{1} -1) + 1 \\
\vdots \\ 
\frac{q_n q_i}{\sum_{i=2}^{n} |q_{i}|^2} (q_{1} -1)
\end{array}
\right] \\
=& -q_{1}q_{i} + \frac{q_2^2 q_i}{\sum_{i=2}^{n} |q_{i}|^2} (q_{1} -1) + \cdots + \frac{q_i^2 q_i}{\sum_{i=2}^{n} |q_{i}|^2} (q_{1} -1) + q_i + \cdots
+ \frac{q_n^2 q_i}{\sum_{i=2}^{n} |q_{i}|^2} (q_{1} -1) \\
=& -q_{1}q_{i} + q_{i}(q_1 - 1) +q_{i} \\
=& \ 0
\end{aligned}
\end{equation}

\clearpage

\begin{equation}
\label{eq:case3}
\begin{aligned}
\bm{u_i}^\top \bm{u_i} =&
\left[
\begin{array}{c}
-q_{i} \\ 
F(2, i) \\
\vdots \\
F(i, i) + 1 \\
\vdots \\ 
F(n, i)
\end{array}
\right]^\top
\left[
\begin{array}{c}
-q_{i} \\ 
F(2, i) \\
\vdots \\
F(i, i) + 1 \\
\vdots \\ 
F(n, i)
\end{array}
\right]
= 
\left[
\begin{array}{c}
-q_{i} \\ 
\frac{q_2 q_i}{\sum_{i=2}^{n} |q_{i}|^2} (q_{1} -1) \\
\vdots \\
\frac{q_i q_i}{\sum_{i=2}^{n} |q_{i}|^2} (q_{1} -1) + 1 \\
\vdots \\ 
\frac{q_n q_i}{\sum_{i=2}^{n} |q_{i}|^2} (q_{1} -1)
\end{array}
\right]^\top
\left[
\begin{array}{c}
-q_{i} \\ 
\frac{q_2 q_i}{\sum_{i=2}^{n} |q_{i}|^2} (q_{1} -1) \\
\vdots \\
\frac{q_i q_i}{\sum_{i=2}^{n} |q_{i}|^2} (q_{1} -1) + 1 \\
\vdots \\ 
\frac{q_n q_i}{\sum_{i=2}^{n} |q_{i}|^2} (q_{1} -1)
\end{array}
\right] \\
=& \ q_{i}^2 + \frac{q_2^2 q_i^2}{(\sum_{i=2}^{n} |q_{i}|^2)^2} (q_{1} -1)^2 + \cdots + \frac{q_i^2 q_i^2}{(\sum_{i=2}^{n} |q_{i}|^2)^2} (q_{1} -1)^2  + \frac{2q_i q_i}{\sum_{i=2}^{n} |q_{i}|^2} (q_{1} -1) + 1 \\
&+ \frac{q_n^n q_i^2}{(\sum_{i=2}^{n} |q_{i}|^2)^2} (q_{1} -1)^2 \\
=& \ q_{i}^2 + \frac{q_i^2 (q_{1} -1)^2}{\sum_{i=2}^{n} |q_{i}|^2} + \frac{2q_i^2(q_{1} -1)}{\sum_{i=2}^{n} |q_{i}|^2} + 1
= q_{i}^2 + \frac{q_i^2 (q_{1}^2 -1)}{\sum_{i=2}^{n} |q_{i}|^2} + 1 \\
=& q_{i}^2 - q_{i}^2 + 1 \\
=& \ 1 \\
\end{aligned}
\end{equation}

\begin{equation}
\label{eq:case4}
\begin{aligned}
\bm{u_i}^\top \bm{u_j} =&
\left[
\begin{array}{c}
-q_{i} \\ 
F(2, i) \\
\vdots \\
F(i, i) + 1 \\
\vdots \\
\vdots \\ 
F(n, i)
\end{array}
\right]^\top
\left[
\begin{array}{c}
-q_{j} \\ 
F(2, j) \\
\vdots \\
\vdots \\
F(j, j) + 1 \\
\vdots \\ 
F(n, j)
\end{array}
\right] 
=
\left[
\begin{array}{c}
-q_{i} \\ 
\frac{q_2 q_i}{\sum_{i=2}^{n} |q_{i}|^2} (q_{1} -1) \\
\vdots \\
\frac{q_i q_i}{\sum_{i=2}^{n} |q_{i}|^2} (q_{1} -1) + 1 \\
\vdots \\ 
\vdots \\ 
\frac{q_n q_i}{\sum_{i=2}^{n} |q_{i}|^2} (q_{1} -1)
\end{array}
\right]^\top
\left[
\begin{array}{c}
-q_{j} \\ 
\frac{q_2 q_j}{\sum_{i=2}^{n} |q_{i}|^2} (q_{1} -1) \\
\vdots \\
\vdots \\
\frac{q_j q_j}{\sum_{i=2}^{n} |q_{i}|^2} (q_{1} -1) + 1 \\
\vdots \\ 
\frac{q_n q_j}{\sum_{i=2}^{n} |q_{i}|^2} (q_{1} -1)
\end{array}
\right] \\
=& \ q_{i}q_{j} + \frac{q_2^2 q_i q_j}{(\sum_{i=2}^{n} |q_{i}|^2)^2} (q_{1} -1)^2 + \cdots + \frac{q_i^2 q_i q_j}{(\sum_{i=2}^{n} |q_{i}|^2)^2} (q_{1} -1)^2 + \frac{q_i q_j}{\sum_{i=2}^{n} |q_{i}|^2} (q_{1} -1) \cdots \\
&+ \frac{q_j^2 q_i q_j}{(\sum_{i=2}^{n} |q_{i}|^2)^2} (q_{1} -1)^2 + \frac{q_j q_i}{\sum_{i=2}^{n} |q_{i}|^2} (q_{1} -1) + \cdots + \frac{q_n^2 q_i q_j}{(\sum_{i=2}^{n} |q_{i}|^2)^2} (q_{1} -1)^2 \\
=& \ q_{i}q_{j} + \frac{q_i q_j}{\sum_{i=2}^{n} |q_{i}|^2}(q_{1} -1)^2 + \frac{q_i q_j}{\sum_{i=2}^{n} |q_{i}|^2} (q_{1} -1) + \frac{q_j q_i}{\sum_{i=2}^{n} |q_{i}|^2} (q_{1} -1) \\
=& \ q_{i}q_{j} + \frac{q_i q_j (q_{1}^2 -1)}{\sum_{i=2}^{n} |q_{i}|^2} \\
=& \ 0
\end{aligned}
\end{equation}
According to Eqs.~\eqref{eq:case1},~\eqref{eq:case2},~\eqref{eq:case3},~\eqref{eq:case4}, we thus have that the matrix $U$ is orthogonal.
\end{proof}
\end{strip}

\,

\end{document}